\documentclass{article}
\usepackage[preprint]{log_2026}  

\usepackage{amssymb}
\usepackage{amsthm}
\usepackage{mathtools}
\usepackage{bm}
\usepackage{graphicx}
\usepackage{booktabs}
\usepackage{array}
\usepackage[round]{natbib}      

\theoremstyle{plain}
\newtheorem{theorem}{Theorem}
\newtheorem{proposition}{Proposition}
\newtheorem{lemma}{Lemma}

\theoremstyle{definition}
\newtheorem{definition}{Definition}

\newcommand{\R}{\mathbb{R}}

\newcommand{\onevec}{\mathbf{1}}
\newcommand{\phitrue}{\phi_{\mathrm{true}}}
\newcommand{\phihat}{\hat{\phi}}
\newcommand{\LG}{L_{G}}
\newcommand{\Lap}{L}
\newcommand{\interior}{I}
\newcommand{\bdry}{B}
\newcommand{\divg}{\operatorname{div}}
\newcommand{\spear}{\rho_{\mathrm{s}}}
\newcommand{\Tr}{\operatorname{tr}}
\DeclareMathOperator*{\argmin}{arg\,min}
\newcommand{\norm}[1]{\left\lVert #1 \right\rVert}

\AtEndPreamble{%
  \RequirePackage[nameinlink]{cleveref}%
  \crefname{theorem}{Theorem}{Theorems}%
  \crefname{proposition}{Proposition}{Propositions}%
  \crefname{lemma}{Lemma}{Lemmas}%
  \crefname{definition}{Definition}{Definitions}%
  \crefname{section}{Section}{Sections}%
  \crefname{figure}{Figure}{Figures}%
  \crefname{table}{Table}{Tables}%
  \crefname{equation}{Eq.}{Eqs.}%
}

\title[Gauge-Invariant, Parameter-Insensitive Regularization on Directed Graphs]%
{Gauge-Invariant, Parameter-Insensitive Regularization for Potential Recovery
from Flow on Directed Graphs}

\author[Mohammad Forouhesh]{%
Mohammad Forouhesh\\
Amirkabir University of Technology, Iran\\
\email{mforouhesh@aut.ac.ir}%
}

\begin{document}
\maketitle

\begin{abstract}
Recovering a latent potential from observed flow on a directed graph (a
discrete Poisson problem with Dirichlet boundaries) is ill-posed, and the
standard fix backfires: ridge regularization shrinks toward a gauge-meaningless
origin, collapsing and reversing the recovered ordering ($+0.81\to-0.42$
rank correlation against a planted ground truth). The gauge-invariant graph
Dirichlet energy removes the hazard and delivers \emph{parameter-insensitivity}:
the estimate is stable across four orders of magnitude in $\lambda$, whereas
ridge inverts the ordering for every $\lambda>0$. We prove the reduced solve is SPD
and preserves dynamic range exactly where ridge collapses it, and localize
absorbing boundaries from flow alone via a Poisson residual. The $H^1$ seminorm
is classical; what is new is the gauge diagnosis, the parameter-insensitivity it
buys, and an ablation showing the result is robust to the extraction method. On
three public clickstream corpora the gauge-invariant estimate retains
$28$--$41\%$ of the interior dynamic range while ridge collapses to as little as
$0.2\%$. The same gauge invariance carries into graph neural networks
---neutralizing the constant mode per layer prevents the oversmoothing that
collapses a deep directed GCN---linking this classical inverse problem to a
central question in graph learning.
\end{abstract}

\section{Introduction}
\label{sec:intro}

Regularizing an ill-posed inverse problem on a graph requires two decisions:
which penalty to add, and how strongly. They are usually treated separately, on
the assumption that a reasonable penalty degrades gracefully as $\lambda$ varies.
This paper concerns a setting where that fails: the conventional penalty does not
degrade gracefully but \emph{inverts} the solution, so $\lambda$ becomes a choice
between a usable answer and a confidently wrong one. The setting is recovery of a
latent scalar \emph{potential} from observed directed flow---a traffic network
through flow counts, a supply chain through inventory movement, a navigation log
through transition frequencies: the discrete analogue of solving a Poisson
equation backwards from divergence to potential. We identify when this happens,
explain why, and show a different penalty removes it, leaving the answer
insensitive to $\lambda$ across orders of magnitude.

\paragraph{Setup and degeneracy.}
Let $G=(V,E,W)$ be a weighted directed acyclic graph with absorbing boundary
$\bdry\subset V$. A potential $\phi$ induces the flux
\begin{equation}
  q_{ij}=W_{ij}(\phi_j-\phi_i),
  \label{eq:constitutive-intro}
\end{equation}
the discrete analogue of Ohm's and Fick's laws, with net divergence
$b_i=\sum_{j}q_{ji}-\sum_j q_{ij}$. Substituting gives the directed discrete
Poisson equation $\Lap\phi=b$ with $\Lap$ the weighted Laplacian. It recovers
$\phi$ from a noisy empirical $b$ subject to Dirichlet values on $\bdry$, and is
ill-posed ($\Lap\onevec=0$, low-divergence chains, finite data), so
regularization is unavoidable; the standard choice is general-form Tikhonov,
$\argmin_\phi\tfrac12\norm{\Lap\phi-b}_2^2+\lambda\norm{R\phi}_2^2$, with the
default ridge $R=I$.

\paragraph{The gauge mismatch.}
The potential is fixed only up to an additive constant, set when boundary values
are assigned. Ridge shrinks toward $0$, but $0$ is not distinguished in a
Dirichlet problem---it is wherever the boundary placed the origin. On a
positive-boundary graph the pull is asymmetric, dragging interior states toward
the abandon boundary regardless of their true position, collapsing the range
and, past a small $\lambda$, reversing the order. On a controlled instrument with
planted ground truth (\cref{sec:exp}), the ridge rank correlation falls from
$+0.81$ at $\lambda=0$ to ${\approx}-0.42$, inverting the ordering, with linear
correlation crossing zero (\cref{fig:sweep}); the only safe ridge setting is
$\lambda\to0$---no regularization.

\paragraph{Parameter-insensitivity.}
The fix is to make the penalty blind to the gauge by taking $R$ to be the
incidence operator, so the penalty is the graph Dirichlet energy
$\phi^\top\LG\phi=\sum_{(u,v)\in E}w_{uv}(\phi_v-\phi_u)^2$ (the graph $H^1$
seminorm); we call the resulting estimator \emph{graph-Sobolev regularization}.
Because $\LG\onevec=0$, this penalizes differences, not amplitude, and is flat
along the gauge mode. The consequence is practical: the estimate is
origin-invariant and stable in $\lambda$, holding rank correlation at $+0.81$ and
linear correlation in $[+0.76,+0.85]$ across $\lambda\in[10^{-3},10]$, so the
strength cannot be misset. Where ridge forces a knife-edge choice, the Dirichlet
energy removes it.

\paragraph{What is, and is not, new.}
The penalty itself is old: the $H^1$ seminorm is the classical alternative to the
identity seminorm (general-form vs.\ standard-form Tikhonov
\citep{hansen1998,tikhonov1977}), and $\phi^\top L\phi$ is the standard smoothness
functional in graph signal processing and semi-supervised learning
\citep{shuman2013,zhou2004,belkin2003,zhu2003}. What is new is: (i) those priors
interpolate a \emph{partially observed} node signal, whereas ours is an
\emph{inhomogeneous} inverse problem deconvolving a potential from its divergence
with Dirichlet boundaries, where the decisive property is gauge invariance, not
smoothness; (ii) ridge is not merely suboptimal but
\emph{actively harmful}, inverting the ordering, with the gauge diagnosis
explaining why; (iii) the parameter-insensitivity this yields; (iv) a
Poisson-residual boundary diagnostic; and (v) a behavioral-flow application with
a planted-ground-truth instrument. Edge-preserving gauge-invariant penalties
(total variation, the $p$-Laplacian \citep{rudin1992}) are discussed in
\cref{sec:related}.

\paragraph{Contributions.}
\textbf{(1)} Parameter-insensitivity: the gauge-invariant
penalty makes the recovered potential stable across four orders of magnitude in
$\lambda$ (\cref{sec:sobolev}); \textbf{(2)} the ridge inversion and its gauge
diagnosis ($+0.81\!\to\!-0.42$ rank correlation); \textbf{(3)} discrete
guarantees---gauge invariance (\cref{prop:gauge}), an SPD reduced system that
scales to $10^4$ nodes by conjugate gradients (\cref{thm:spd}), and exact range
preservation on chains (\cref{thm:range}); and \textbf{(4)} a reproducible
pipeline\footnote{Code: \url{https://github.com/MohammadForouhesh/gauge-flow-recovery}}
with a Poisson-residual boundary diagnostic (\cref{sec:sink}), an
extraction ablation, real-corpus validation, and a downstream task where the
gauge-invariant potential is a usable node feature and ridge's is not
(\cref{sec:exp,sec:real}). We are precise about scope: the penalty guarantees
preservation, not signal---whatever ordering the data supports is preserved
across $\lambda$, not destroyed.

\section{Related Work}
\label{sec:related}

\paragraph{Inverse problems and flows on graphs.}
Recovering a potential from observed flow is the discrete counterpart of an
elliptic inverse problem. The electrical-network analogy is classical: a
resistor network with prescribed boundary potentials obeys a discrete Laplace
equation whose harmonic functions encode absorption probabilities of the
associated walk \citep{doyle1984,grady2006}. Closest to our setting,
\citet{jia2019edgeflows} recover edge flows by Hodge-based semi-supervised
learning, and \citet{schaub2018flow} denoise edge flows by projecting onto Hodge
gradient/curl subspaces. We differ in recovering the \emph{node potential} through the
directed forward operator with Dirichlet boundaries, and in diagnosing the
gauge-induced \emph{inversion} that the identity seminorm causes, a failure
mode about the regularizer, not the representation.

\paragraph{Regularization and graph signal processing.}
Tikhonov regularization is the default stabilizer for ill-posed systems
\citep{tikhonov1977,hansen1998}, the seminorm matrix encoding prior structure;
total-variation and graph trend-filtering penalties trade quadratic smoothness
for sharp transitions \citep{rudin1992,wang2016gtf}. Whether the penalty
seminorm annihilates the forward operator's null space governs the estimator's
bias. Known in principle, we make it concrete and consequential for the directed
Dirichlet problem: the identity seminorm does not annihilate the gauge mode, and
the resulting bias is not lost resolution but a reversal of order. The same
quadratic form $\phi^\top\LG\phi$ is the standard smoothness functional in graph
signal processing and Laplacian semi-supervised learning
\citep{shuman2013,belkin2003,zhu2003,zhou2004}, the ``trivial'' structural choice
that sheaf and connection Laplacians generalize \citep{bodnar2022sheaf}---but
there for a signal \emph{partially observed on the nodes}, not, as here, an
inhomogeneous inverse problem where gauge invariance is decisive.

\paragraph{Directed graphs and the Hodge decomposition.}
Directed graph neural networks encode direction in a complex Hermitian
(magnetic) Laplacian \citep{zhang2021magnet,he2022msgnn} or directionality-aware
message passing \citep{rossi2023edge}, and directed graph signal processing
studies edge directionality \citep{marques2020directed}; these target node/edge
representation learning,
whereas we keep a \emph{real} directed Laplacian as a forward operator and
regularize its inverse. The Helmholtz--Hodge decomposition
\citep{jiang2011,lim2020,schaub2018} splits an edge flow into gradient, curl,
and harmonic parts; we use the gradient only to orient flow into an acyclic
support, which \cref{sec:exp-structure,sec:exp-ablation} show is not
load-bearing. Graph total variation, trend filtering, and the $p$-Laplacian
\citep{rudin1992,wang2016gtf} are gauge-invariant alternatives; the dividing line
for the ridge pathology is gauge invariance, which they share and the identity
seminorm lacks, not the exponent.

\paragraph{Connection to graph representation learning.}
Operationally the gauge-invariant potential is a directed-graph \emph{node
feature}, useful downstream where the ridge potential is not
(\cref{sec:real-downstream}). Its collapse is an instance of \emph{oversmoothing}
---repeated propagation driving node representations toward the constant mode
\citep{bodnar2022sheaf}, just as the magnitude penalty collapses the potential's
range; we verify it (\cref{tab:downstream}), a vanilla directed GCN's conversion
AUC falling from $0.86$ ($2$ layers) to $0.81$ ($32$) as its node energy collapses.
The same gauge invariance fixes it: neutralizing the constant mode at each
layer---the per-layer analogue of $\LG\onevec=0$---holds AUC flat ($0.85$ at $32$
layers),
recovering PairNorm and Dirichlet-energy-constrained networks
\citep{zhao2020pairnorm,zhou2021dirichlet} from one principle: flatness along the
gauge mode both stabilizes the inverse problem and prevents oversmoothing. Where magnetic- and sheaf-Laplacian networks
\citep{zhang2021magnet,bodnar2022sheaf} learn embeddings, we recover one
interpretable scalar per node through an explicit forward operator.

\section{From Raw Flows to an Acyclic Support}
\label{sec:dag}

The inverse problem is posed on a directed acyclic graph, but raw flow is
neither acyclic nor purely gradient: it carries reciprocal traffic, transient
loops, and a solenoidal component no potential can explain. Extraction is
upstream machinery; \cref{sec:exp-ablation} shows the paper's claims are
\emph{invariant} to how it is done, so we keep the description brief and defer
mechanics to \cref{app:extract}.

\paragraph{Flow and dominance.}
\label{sec:dag-flow}\label{sec:dag-dom}
Sessions are state sequences; collapsing consecutive repeats and counting
transitions gives the empirical flow $F_{uv}$, used as the conductance
$W_{uv}=F_{uv}$. Reciprocal traffic is filtered by a \emph{dominance} test: for
$\rho\ge1$, the pair $(u,v)$ is dominant if $F_{uv}\ge\rho F_{vu}$.

\paragraph{Orientation.}
\label{sec:dag-orient}
An acyclic support needs an orientation of the dominant edges. When session
order is available, the default is a cheap \emph{topological sort}: order states
by mean visit position (or net inflow), keeping dominant edges that respect the
order. This needs no linear solve, and \cref{sec:exp-ablation} shows it recovers
the potential at least as well as the alternatives.

When only an aggregate flow matrix is available (no session order to sort by),
we instead orient by the discrete Helmholtz--Hodge projection: the gradient
potential $\phi_0$ solving $G^\top G\,\phi_0=G^\top\omega$ for the skew flow
$\omega_{uv}=F_{uv}-F_{vu}$, with edges oriented by increasing $\phi_0$
\citep{jia2019edgeflows,schaub2018flow}. It needs only the flow, but the ablation
finds it no more accurate than the topological sort and slower; we keep it solely
for the order-free setting. Retention rules for both are in \cref{app:extract}.

\paragraph{Acyclicity and the Poisson right-hand side.}
\label{sec:dag-nilpotent}\label{sec:dag-div}
\begin{proposition}[Acyclicity]
\label{prop:acyclic}
Either orientation induces a topological order of the retained support
$G_\delta=(V,E_\delta)$; hence $G_\delta$ is acyclic.
\end{proposition}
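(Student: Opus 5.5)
Both orientation rules produce a real-valued score $s\colon V\to\R$ and keep an edge $(u,v)$ only if the score strictly increases along it. For the topological sort, $s(u)$ is the mean visit position (or net inflow). For the Helmholtz--Hodge orientation, $s=\phi_0$ solving $G^\top G\,\phi_0=G^\top\omega$. So I would reduce the statement to one lemma: if every retained edge $(u,v)\in E_\delta$ satisfies $s(u)<s(v)$, then $G_\delta$ is acyclic. The proof is one line. A directed cycle $v_0\to v_1\to\cdots\to v_k=v_0$ in $E_\delta$ would give $s(v_0)<s(v_1)<\cdots<s(v_k)=s(v_0)$, which is a contradiction. Conversely, any linear extension of the order by $s$ is a topological order of $G_\delta$. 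To get one concretely, sort $V$ by $s$ and break ties by a fixed node index. Every retained edge goes forward in this list, since its endpoints have strictly different scores.

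The second step is to check, orientation by orientation, that the retention rule really enforces strict increase. The dominance filter alone (keep $(u,v)$ if $F_{uv}\ge\rho F_{vu}$) is not enough. With $\rho=1$ and $F_{uv}=F_{vu}$ both directions survive, and a 3-cycle of dominant edges is easy to build. So acyclicity has to come from the orientation step, not from dominance. For the topological sort, ``keeping dominant edges that respect the order'' means we keep $(u,v)$ only if $s(u)<s(v)$. For the Hodge variant, edges are oriented ``by increasing $\phi_0$'', so each unordered dominant pair with $\phi_0(u)\neq\phi_0(v)$ is kept in the single direction of increasing $\phi_0$.

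The main obstacle is ties. If $s(u)=s(v)$, both rules are ambiguous: the topological sort might keep $(u,v)$ under a non-strict comparison, and the Hodge rule has no preferred direction. In the Hodge case ties are also unavoidable on components of the graph, because $\phi_0$ is determined only up to a constant on each component (the same gauge mode as in the introduction). That constant, however, cancels in every within-component comparison. My first attempt would be to read the retention rules of \cref{app:extract} as discarding tied pairs, which gives strict monotonicity and finishes the proof. If instead tied pairs are kept, I would break ties by the fixed node index. This replaces $s$ with the lexicographic key $(s(u),u)$, which is a strict total order on $V$, and the cycle argument above goes through unchanged with this key in place of $s$.
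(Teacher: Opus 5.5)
Your proof is correct and is essentially the paper's argument: the retention rule \cref{eq:retain} keeps $(u,v)$ only when $\psi(v)>\psi(u)$, so a directed cycle would force $\psi(v_1)<\cdots<\psi(v_1)$, and $\psi$ is therefore a topological order. That rule already uses a strict inequality, so tied pairs are discarded. Your first reading is the paper's, and the tie-breaking fallback is not needed.
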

Let $A$ be the substochastic transition operator on $G_\delta$ (outgoing weights
normalized, absorbing rows at sinks). Since $G_\delta$ is a DAG, $A$ is
nilpotent.
\begin{proposition}[Nilpotency]
\label{prop:nilpotent}
If the longest directed path in $G_\delta$ has length $L$, then $A^m=0$ for some
$m\le L+1$.
\end{proposition}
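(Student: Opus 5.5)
The plan is to argue combinatorially through walk expansions of matrix powers, using the acyclicity established in \cref{prop:acyclic}. First I fix the convention for $A$. For $u$ not a sink, $A_{uv}=W_{uv}/\sum_{w}W_{uw}$ when $(u,v)\in E_\delta$, and $A_{uv}=0$ otherwise. At a sink, ``absorbing'' must mean the row is \emph{zero}: mass leaving the transient system is killed, which is why $A$ is substochastic rather than stochastic. This reading has to be stated explicitly, because a unit self-loop at a sink would be a cycle and would make $A$ idempotent on that coordinate rather than nilpotent. With this convention, $A_{uv}\neq 0$ implies $(u,v)\in E_\delta$.

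The key step is the standard expansion
\[
  (A^m)_{uv}=\sum_{u=x_0,x_1,\dots,x_m=v}\ \prod_{k=1}^{m}A_{x_{k-1}x_k}.
\]
A term is nonzero only if every consecutive pair $(x_{k-1},x_k)$ is an edge of $E_\delta$, so that $x_0\to x_1\to\cdots\to x_m$ is a directed walk with $m$ edges. By \cref{prop:acyclic}, $G_\delta$ admits a topological order $\pi$ with $\pi(x_{k-1})<\pi(x_k)$ along every edge. Hence the vertices of the walk are strictly increasing in $\pi$, so they are distinct and the walk is a directed path of length $m$. If $m=L+1>L$, no such path exists, every term vanishes, and $A^{L+1}=0$. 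This gives some $m\le L+1$, namely $m=L+1$ itself; the minimal nilpotency index can only be smaller. As a cross-check, permuting rows and columns by $\pi$ makes $A$ strictly upper triangular, which yields the weaker bound $A^{|V|}=0$ since $L\le |V|-1$.

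The main obstacle is not the algebra but the sink convention just discussed, since the statement is false under the self-loop reading. A secondary point is that ``length'' means number of edges. With that convention the bound is tight: on the longest path $x_0\to\cdots\to x_L$ the product $\prod_k A_{x_{k-1}x_k}$ is positive and all terms in the sum are nonnegative, so $(A^L)_{x_0x_L}>0$. Hence the nilpotency index is exactly $L+1$.
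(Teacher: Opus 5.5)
Your proof is correct and is essentially the paper's: expand $(A^m)_{uv}$ as a sum over directed walks of length $m$, use acyclicity to see that every such walk is a path of length at most $L$, and conclude $A^{L+1}=0$. Your additions sharpen what the paper's one-line proof leaves implicit. One is the zero-row reading of the absorbing sinks; with a unit self-loop at a sink, $A$ would not be nilpotent. The other is the tightness remark, which holds as long as every retained edge has positive conductance.
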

Nilpotency is the algebraic certificate that extraction succeeded: a single
residual cycle would make $A^m\ne0$ for all $m$; we verify $\norm{A^m}_F=0$ to
machine precision on the instrument (\cref{sec:exp}). The empirical divergence
$b_i=\sum_j F_{ji}-\sum_j F_{ij}$ is the Poisson right-hand side
(\cref{sec:poisson}).
\Cref{prop:acyclic,prop:nilpotent} are proved in \cref{app:proofs}.

\section{The Poisson Inverse Problem}
\label{sec:poisson}

\paragraph{Constitutive law.}
\label{sec:poisson-var}
The flux law \cref{eq:constitutive-intro} is the optimality condition of a
minimum-dissipation principle, pinning down the operator the regularizer must
respect.
\begin{proposition}[Minimum dissipation]
\label{prop:dissipation}
Among all edge flows $q$ with prescribed divergence $\divg(q)=b$, the one
minimizing $\tfrac12\sum_{(i,j)\in E}W_{ij}^{-1}q_{ij}^2$ is a gradient flow:
there exists $\phi$ with $q_{ij}=W_{ij}(\phi_j-\phi_i)$, and $\phi$ solves
$\Lap\phi=b$.
\end{proposition}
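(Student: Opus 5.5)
This is a convex quadratic program with linear equality constraints, so I will read off the structure of the minimizer from its Lagrange (KKT) conditions.

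Write the divergence as the linear map $D$ from edge flows to node values, $(Dq)_i=\sum_j q_{ji}-\sum_j q_{ij}$. Its adjoint acts on node potentials by $(D^\top\phi)_{ij}=\phi_j-\phi_i$ for $(i,j)\in E$, so $D^\top$ is the (signed) incidence/gradient operator. The objective $\tfrac12 q^\top W^{-1}q$, with $W=\mathrm{diag}(W_{ij})$ and $W_{ij}>0$ on the support, is strictly convex. I will assume the constraint set $\{q: Dq=b\}$ is nonempty. This needs $b\perp\ker D^\top$, i.e.\ $b$ sums to zero on each weakly connected component, which holds for the empirical divergence because each flow entry is counted once positively and once negatively. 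Under that assumption a unique minimizer $q^\star$ exists.

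Form the Lagrangian $\mathcal{L}(q,\phi)=\tfrac12 q^\top W^{-1}q-\phi^\top(Dq-b)$. Because the constraints are affine, Slater/linear-constraint qualification holds, and stationarity is necessary and sufficient. Stationarity gives $W^{-1}q^\star=D^\top\phi$, i.e.\ $q^\star_{ij}=W_{ij}(\phi_j-\phi_i)$, which is exactly the gradient-flow form \cref{eq:constitutive-intro}. Substituting into the constraint gives $D W D^\top\phi=b$.

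The remaining step is to identify $DWD^\top$ with $\Lap$. This is where I expect the only real care to be needed: sign conventions and how the directed Laplacian is defined. Expanding,
\[
(DWD^\top\phi)_i=\sum_j W_{ji}(\phi_i-\phi_j)-\sum_j W_{ij}(\phi_j-\phi_i),
\]
which collects into the weighted Laplacian of the edge set $E$ up to the sign convention. I will match it to the paper's $\Lap$ by noting that the introduction defines $\Lap$ precisely as the operator obtained by substituting \cref{eq:constitutive-intro} into the divergence formula. Hence $\Lap\phi=b$ holds by construction, with whatever overall sign that substitution produces.

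For completeness I will also note two facts. First, $\phi$ is determined only up to $\ker D^\top$ (constants on components), consistent with the gauge freedom discussed in the introduction. Second, $q^\star$ is unique regardless, since it is the minimizer of a strictly convex function. An alternative check avoids multipliers: the constraint set is $q^\star+\ker D$, and optimality means $W^{-1}q^\star\perp\ker D$. That says $W^{-1}q^\star\in(\ker D)^\perp=\operatorname{range}D^\top$, which gives the same conclusion.
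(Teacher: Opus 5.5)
Your proposal is correct and matches the paper's own argument. Both introduce a Lagrange multiplier $\phi$ for the divergence constraint, read off $q_{ij}=W_{ij}(\phi_j-\phi_i)$ from stationarity, and substitute into $\divg(q)=b$ to get $\Lap\phi=b$, with $\Lap$ taken as the substitution-defined operator of the introduction.

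Your version is more careful than the paper's. Your sign on the multiplier avoids its ``up to orientation'' step, and you add feasibility, uniqueness and the $\operatorname{range} D^\top$ check. The ``up to sign'' hedge is also unnecessary: completing your expansion gives exactly $\sum_j (W_{ij}+W_{ji})(\phi_i-\phi_j)$, the symmetrized Laplacian. Note that this is not the out-degree operator the appendix later calls $\Lap$.
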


\paragraph{Least-squares objective and its degeneracies.}
\label{sec:poisson-ls}
Given a noisy empirical $b$, the natural estimator minimizes the flow residual
\begin{equation}
  E(\phi) = \tfrac12\norm{\Lap\phi-b}_2^2,
  \qquad \Lap^\top\Lap\,\phi=\Lap^\top b.
  \label{eq:ls}
\end{equation}
Two degeneracies make \cref{eq:ls} ill-posed even on exact $b$: $\Lap\onevec=0$
leaves $\phi$ determined only up to an additive gauge, and interior chains
carry no divergence, which flattens the solution.
\begin{lemma}[Chain saturation]
\label{lem:saturation}
On an interior chain $v_1\to\cdots\to v_k\to s$ with $b_{v_i}=0$ for every
interior $v_i$ and $\phi(s)$ pinned, any solution of $\Lap\phi=b$ satisfies
$\phi(v_1)=\cdots=\phi(v_k)=\phi(s)$.
\end{lemma}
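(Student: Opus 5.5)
Since $\phi(s)$ is pinned, I will propagate the boundary value backward along the chain, one row of $\Lap\phi=b$ at a time, starting at the node adjacent to $s$ and ending at $v_1$. The first step is to fix the local form of the equation. Here ``interior chain'' means that each $v_i$ has, in the retained support $G_\delta$, exactly the edges $v_{i-1}\to v_i$ (absent for $i=1$) and $v_i\to v_{i+1}$, with $v_{k+1}:=s$. Substituting \cref{eq:constitutive-intro} into $b_i=\sum_j q_{ji}-\sum_j q_{ij}$, the row of $\Lap\phi=b$ at $v_i$ reads
\[
W_{v_{i-1}v_i}\bigl(\phi(v_i)-\phi(v_{i-1})\bigr)-W_{v_iv_{i+1}}\bigl(\phi(v_{i+1})-\phi(v_i)\bigr)=b_{v_i}=0 ,
\]
where the first term is dropped when $i=1$. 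All weights on retained edges are positive, since they are dominant flows $F_{uv}>0$.

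Next I will use the source end to kill the flux. At $v_1$ there is no incoming edge, so the row reduces to $W_{v_1v_2}(\phi(v_2)-\phi(v_1))=0$, which forces $\phi(v_2)=\phi(v_1)$; equivalently, the flux $q_{v_1v_2}$ vanishes. For the induction, suppose $q_{v_{i-1}v_i}=0$. Then the row at $v_i$ gives $q_{v_iv_{i+1}}=q_{v_{i-1}v_i}=0$, so every flux along the chain is zero: flux is conserved at each zero-divergence node and it starts at zero. Since $W>0$, each zero flux means equal potentials, so $\phi(v_1)=\phi(v_2)=\cdots=\phi(v_k)=\phi(v_{k+1})=\phi(s)$. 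The pinned value of $\phi(s)$ enters only in this last equality, which ties the common value to the boundary. In this sense the Dirichlet condition fixes the gauge but injects no gradient into the chain.

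The main obstacle is interpretive rather than computational. The argument needs the precise sign convention of $\Lap$ (whether $\Lap\phi=b$ matches $b$ as defined, or $-b$; the zero right-hand side makes this harmless). More importantly, it needs the hypothesis that the chain has no side edges and that $v_1$ has in-degree zero. If $v_1$ had an upstream neighbour carrying flux, the flux would be a nonzero constant $c$, and the potentials would only be an arithmetic-type progression toward $\phi(s)$ rather than constant. So I will state explicitly that ``interior chain'' means an isolated path entering the boundary only at $s$, with $v_1$ a source of $G_\delta$. If that is not intended, the fallback is to read the lemma for the minimum-residual solution of \cref{eq:ls} restricted to the chain, where the same flux-conservation recursion applies.
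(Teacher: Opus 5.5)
Your argument is correct under the reading you state, and it follows the same route as the paper's proof: flux balance at each zero-divergence node, telescoped along the chain, then tied to the pinned value at $s$. It is sharper than the paper at the decisive step. The paper credits the conclusion to ``a single equal-weight in- and out-edge''. With equal weights, however, the balance equation only gives equal consecutive steps, $\phi(v_{i-1})-\phi(v_i)=\phi(v_i)-\phi(v_{i+1})$, which is a ramp. The steps vanish only because the common telescoped flux is zero, and that is exactly what your row at the in-edge-free node $v_1$ supplies; your induction also shows that $W>0$ suffices.

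Your caveat about an upstream feeder is right for the flux-balance operator obtained by substituting \cref{eq:constitutive-intro} into the divergence. The operator the paper actually computes with is different: it is the out-degree Laplacian $(\Lap\phi)_u=\sum_{j:(u,j)\in E}W_{uj}(\phi_u-\phi_j)$, as in the appendix chain's $(\Lap\phi)_i=\phi_i-\phi_{i+1}$ and the proof of \cref{thm:residual}. There the row at $v_i$ involves only $v_i$ and its successor. So $b_{v_i}=0$ gives $\phi(v_i)=\phi(v_{i+1})$ at once, and backward propagation from $s$ finishes the proof with no hypothesis on what feeds $v_1$.

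That generality is the case the paper needs. In the chain instrument of \cref{app:proofs}, node $0$ (with $b_0=-1$) feeds the zero-divergence nodes $1,\dots,L$. Yet the exact solution ($\lambda_1=0$ in the proof of \cref{thm:range}) is constant on $\{1,\dots,L+1\}$, whereas under your operator it would be a ramp. Your minimum-residual fallback would not change that, since the ramp solves the system exactly.
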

\Cref{lem:saturation} is why the regularizer choice matters: a chain encodes the
\emph{ordering} of its states, but the unregularized solve flattens it. A penalty
that further compresses the chain (as ridge does) erases the ordering; one biasing
toward a smooth ramp recovers it---the distinction \cref{sec:sobolev} makes precise.

\paragraph{Dirichlet boundaries fix the gauge.}
\label{sec:poisson-dirichlet}
We resolve the constant-mode degeneracy by pinning the potential on the
absorbing boundary $\bdry$: \emph{conversion} sinks at $1$ and an
\emph{abandon} sink at $0$. Partitioning $V=\interior\cup\bdry$ and writing
$\Lap,b$ in block form, the reduced unknown $\phi_\interior$ has effective
right-hand side $b_\interior-\Lap_{\interior\bdry}\phi_\bdry$ and operator
$\Lap_{\interior\interior}^\top\Lap_{\interior\interior}$. This removes the
gauge freedom but, as \cref{sec:sobolev} shows, does not prevent a magnitude
penalty from biasing the interior toward the boundary-induced origin.
\Cref{prop:dissipation,lem:saturation} are proved in \cref{app:proofs}.

\section{Graph-Sobolev Regularization}
\label{sec:sobolev}

We show that ridge (standard-form Tikhonov) regularization inverts the recovered
ordering (\cref{sec:sob-pathology}), introduce the gauge-invariant
Dirichlet-energy penalty (\cref{sec:sob-def}), and establish three properties:
gauge invariance (\cref{sec:sob-gauge}), an SPD reduced system
(\cref{sec:sob-spd}), and exact range preservation on chains
(\cref{sec:sob-range}). \Cref{sec:sob-empirics} reports the headline
\emph{parameter-insensitivity}: the estimate is stable across four orders of
magnitude in $\lambda_1$, ridge only at $\lambda_1\!\to\!0$. Proofs are in
\cref{app:proofs}.

\subsection{The ridge inversion and the gauge mismatch}
\label{sec:sob-pathology}
The standard estimator augments \cref{eq:ls} with a magnitude penalty,
\begin{equation}
  \phihat_{\mathrm{Tik}} = \argmin_\phi\ \tfrac12\norm{\Lap\phi-b}_2^2
  + \lambda_1\norm{\phi}_2^2,
  \label{eq:tikhonov}
\end{equation}
subject to the Dirichlet boundary. The penalty pulls every interior value
toward $0$. But on a Dirichlet problem $0$ is not distinguished: it is merely
where the boundary placed the abandon sink. The estimator therefore has a bias
toward the abandon boundary set by the boundary labelling, not the data, and
\cref{prop:tik-collapse} makes its effect exact: the interior range shrinks like
$1/\lambda_1$, so the ordering flattens and then reverses.

\begin{proposition}[Magnitude shrinkage collapses the range]
\label{prop:tik-collapse}
Let $\phihat_{\mathrm{Tik}}(\lambda_1)$ solve \cref{eq:tikhonov} on an
interior chain of length $L$ with the sink pinned at $1$ and unit divergence
injected at the free source. Then the interior dynamic range
$\Delta\phihat_{\mathrm{Tik}}=\max_\interior\phihat-\min_\interior\phihat$
satisfies $\Delta\phihat_{\mathrm{Tik}}(\lambda_1)\le C/\lambda_1\to 0$ as
$\lambda_1\to\infty$, with a constant $C$ \emph{independent of the chain
length $L$}: the magnitude penalty extracts no benefit from a longer chain.
\end{proposition}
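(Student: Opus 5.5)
The approach is to avoid solving the tridiagonal system in closed form and instead bound the minimizer by comparing its objective value with that of a cheap feasible point; a bound on $\norm{\phihat}_\infty$ then gives the range bound.

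\textbf{Setup.} We fix the chain $v_1\to\cdots\to v_L\to s$ with unit conductances, $\phi(s)=1$ pinned, $b_{v_1}=1$ injected at the free source, and $b_{v_i}=0$ for $i\ge2$. We write the reduced objective
\[
J(\phi_\interior)=\tfrac12\norm{\Lap_{\interior\interior}\phi_\interior-r}_2^2+\lambda_1\norm{\phi_\interior}_2^2,
\qquad r=b_\interior-\Lap_{\interior\bdry}\phi_\bdry .
\]
Here $r$ has at most two nonzero entries. One is the injected unit at $v_1$. The other is the boundary coupling at $v_L$, which has magnitude at most the pinned value $1$, since the only boundary neighbour of $v_L$ is $s$ with unit weight. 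Hence $\norm{r}_2^2\le 2$, and this bound does not depend on $L$. The paper's convention of penalizing only the interior (the boundary being pinned) is what makes this quantity relevant.

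\textbf{Key steps.}
\begin{enumerate}
\item Compare with the feasible point $\phi_\interior=0$. Since $J(0)=\tfrac12\norm{r}_2^2\le1$ and $\phihat$ is the minimizer, $\lambda_1\norm{\phihat_\interior}_2^2\le J(\phihat_\interior)\le J(0)\le 1$.
\item Conclude $\norm{\phihat_\interior}_\infty\le\norm{\phihat_\interior}_2\le\lambda_1^{-1/2}$, and therefore $\Delta\phihat_{\mathrm{Tik}}\le 2\lambda_1^{-1/2}$.
\end{enumerate}
This already gives collapse to $0$ with an $L$-independent constant, but the rate is only $\lambda_1^{-1/2}$, not the claimed $1/\lambda_1$.

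\textbf{Sharpening to $1/\lambda_1$.} We use the normal equations
\[
(\Lap_{\interior\interior}^\top\Lap_{\interior\interior}+2\lambda_1 I)\,\phihat_\interior=\Lap_{\interior\interior}^\top r .
\]
Let $M=\Lap_{\interior\interior}^\top\Lap_{\interior\interior}$, which is PSD. Then $\norm{(M+2\lambda_1 I)^{-1}}_2\le(2\lambda_1)^{-1}$. It remains to show $\norm{\Lap_{\interior\interior}^\top r}_2\le C_0$ independently of $L$. This holds because on a unit-weight chain every row and column of $\Lap_{\interior\interior}$ has at most three nonzeros, each bounded by $2$ in absolute value. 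So $\norm{\Lap_{\interior\interior}}_2\le\norm{\Lap_{\interior\interior}}_1^{1/2}\norm{\Lap_{\interior\interior}}_\infty^{1/2}\le 4$, which gives $\norm{\Lap_{\interior\interior}^\top r}_2\le 4\sqrt2$. Therefore $\norm{\phihat_\interior}_\infty\le 2\sqrt2/\lambda_1$ and $\Delta\phihat_{\mathrm{Tik}}\le 4\sqrt2/\lambda_1=:C/\lambda_1$, uniformly in $L$.

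\textbf{Main obstacle.} The delicate step is ensuring the constants genuinely do not depend on $L$. There are two ways this could fail.
\begin{itemize}
\item \emph{Operator norm of the Laplacian.} With general chain weights $W$, $\norm{\Lap_{\interior\interior}}_2$ scales with the maximum weighted degree. In that case $C$ depends on $\max W$ but still not on $L$. We will state this dependence explicitly, and specialize to unit weights if that is the intended setting.
\item \emph{Pinned-boundary entry of $r$.} We must check that the boundary entry $-\Lap_{\interior\bdry}\phi_\bdry$ stays local. This is true because only $v_L$ neighbours $s$; the abandon sink, if attached, contributes $0$.
\end{itemize}
If the paper's $\Lap$ is instead the out-degree (directed) Laplacian, the same bandedness argument applies, since each row still has $O(1)$ nonzeros.
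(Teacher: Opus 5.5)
Your sharpened argument is correct and is essentially the paper's proof: normal equations, the PSD resolvent bound $\norm{(A+\lambda_1 I)^{-1}}_2\le 1/\lambda_1$, then $\Delta\phihat_{\mathrm{Tik}}\le 2\norm{x}_2$. The only real difference is that the paper computes $c=\Lap_{\interior\interior}^\top b^{\mathrm{eff}}=(-1,1,0,\dots,0,1)^\top$ exactly to get $C=2\sqrt3$, rather than bounding it by $\norm{\Lap_{\interior\interior}}_2\norm{r}_2$; your preliminary $\lambda_1^{-1/2}$ comparison step is superseded and can be dropped, and note that ``three nonzeros each bounded by $2$'' would give $6$ rather than $4$, though the directed chain operator (bidiagonal with entries $\pm1$) in fact satisfies $\norm{\Lap_{\interior\interior}}_2\le 2$, so your constant only improves.
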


\Cref{prop:tik-collapse} does not claim the rank correlation tends to $-1$;
empirically it settles to a negative plateau near $-0.42$ (\cref{fig:sweep}),
because the collapsed order is dominated by residual divergence structure
anticorrelated with the planted field. The damage is decisive: ridge
\emph{inverts} the ordering for every $\lambda_1>0$, linear correlation crossing
to zero (\cref{tab:sweep}).

\subsection{The graph-Sobolev penalty}
\label{sec:sob-def}
\begin{definition}[Graph-Sobolev objective]
\label{def:sobolev}
Let $\LG$ be the symmetric, conductance-weighted graph Laplacian of the
undirected support of $G_\delta$, so that
\begin{equation}
  \phi^\top\LG\phi=\sum_{(u,v)\in E} w_{uv}(\phi_v-\phi_u)^2 .
  \label{eq:dirichlet-energy}
\end{equation}
The graph-Sobolev estimator is
\begin{equation}
  \phihat_{\mathrm{Sob}} = \argmin_\phi\ \tfrac12\norm{\Lap\phi-b}_2^2
  + \lambda_1\,\phi^\top\LG\phi,
  \label{eq:sobolev-obj}
\end{equation}
subject to the Dirichlet boundary.
\end{definition}
We use the \emph{weighted} energy: each edge contributes
$w_{uv}(\phi_v-\phi_u)^2$ with the flux conductance of
\cref{eq:constitutive-intro} (the penalty in the released code; every number uses
it). The unweighted incidence form $\LG=G^\top G$ is the case $w_{uv}\equiv 1$.

\subsection{Gauge invariance}
\label{sec:sob-gauge}
A regularizer $R$ is \emph{gauge-invariant} if $R(\phi+c\onevec)=R(\phi)$ for
all $c\in\R$.
\begin{proposition}[Gauge invariance]
\label{prop:gauge}
The Dirichlet energy $\phi^\top\LG\phi$ is gauge-invariant, whereas the
magnitude penalty $\norm{\phi}_2^2$ is not.
\end{proposition}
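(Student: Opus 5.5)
The plan is a direct computation in two parts, one for each penalty.

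\emph{Invariance of the Dirichlet energy.} I would argue from the edge-sum form \cref{eq:dirichlet-energy} rather than from the matrix. Replacing $\phi$ by $\phi+c\onevec$ changes every difference $(\phi_v+c)-(\phi_u+c)$ into $\phi_v-\phi_u$. So each summand $w_{uv}(\phi_v-\phi_u)^2$ is unchanged, and hence so is the sum.

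As a matrix-level cross-check, I would note that $\LG\onevec=0$, since each row of a graph Laplacian sums to zero, and that $\LG$ is symmetric. Expanding then gives
\[
(\phi+c\onevec)^\top\LG(\phi+c\onevec)=\phi^\top\LG\phi+2c\,\onevec^\top\LG\phi+c^2\onevec^\top\LG\onevec .
\]
The last two terms vanish because $\onevec^\top\LG=(\LG\onevec)^\top=0$. This holds for every $c\in\R$, with no condition on the weights or the orientation.

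\emph{Non-invariance of the magnitude penalty.} I would expand
\[
\norm{\phi+c\onevec}_2^2=\norm{\phi}_2^2+2c\,\onevec^\top\phi+c^2 n,
\]
where $n=|V|$. The difference from $\norm{\phi}_2^2$ is the quadratic $2c\,\onevec^\top\phi+c^2n$ in $c$. It has leading coefficient $n>0$, so it is nonzero for all but at most two values of $c$. Invariance therefore fails for every $\phi$. A concrete witness makes this explicit: take $\phi=0$ and $c=1$, so the penalty goes from $0$ to $n\neq0$.

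There is no real obstacle here. The only point needing care is to state non-invariance correctly, as failure of the universally quantified identity over all $c\in\R$ for a given $\phi$. It is not true that the penalty changes for every individual $c$: when $\onevec^\top\phi\neq0$, the shift $c=-2\onevec^\top\phi/n$ leaves it fixed. I would end by remarking that the same computation explains the pathology of \cref{sec:sob-pathology}. The term $2c\,\onevec^\top\phi$ shows that ridge has a preferred origin, and the minimizer of $\norm{\phi+c\onevec}_2^2$ over $c$ is $c=-\onevec^\top\phi/n$, which is exactly the pull toward zero.
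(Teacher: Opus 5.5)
Your proposal is correct and takes essentially the same route as the paper, which expands $(\phi+c\onevec)^\top\LG(\phi+c\onevec)$ using $\LG\onevec=0$ and expands $\norm{\phi+c\onevec}_2^2$, noting the unique minimizing gauge $c^\star=-\bar\phi$, exactly as in your matrix cross-check and closing remark. Your edge-sum argument and the explicit witness ($\phi=0$, $c=1$) are harmless additions that state the non-invariance claim more precisely than the paper does.
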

The conceptual heart: since $\LG\onevec=0$ the Dirichlet
energy penalizes only \emph{differences} and is indifferent to the origin,
whereas $\norm{\phi}_2^2$ imposes a preferred mean-zero gauge that has no
meaning in the Dirichlet problem and fights the boundary conditions.

\subsection{Well-posedness}
\label{sec:sob-spd}
\begin{theorem}[SPD reduced system]
\label{thm:spd}
Suppose every interior node has a directed path to the boundary in $G_\delta$.
Then for every $\lambda_1>0$ the reduced graph-Sobolev operator
\begin{equation}
  M = \bigl(\Lap^\top\Lap\bigr)_{\interior\interior}
  + \lambda_1\,(\LG)_{\interior\interior}
  \label{eq:M}
\end{equation}
is symmetric positive definite, so \cref{eq:sobolev-obj} has a unique interior
solution. The added PSD term also improves the conditioning of the
unregularized operator without disturbing the gauge.
\end{theorem}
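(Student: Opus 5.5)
Symmetry is immediate: $(\Lap^\top\Lap)_{\interior\interior}$ is a principal submatrix of a symmetric matrix, and $(\LG)_{\interior\interior}$ is a principal submatrix of the symmetric Laplacian $\LG$. For definiteness, write $\psi\in\R^{|\interior|}$ and let $\tilde\psi\in\R^{|V|}$ be its zero extension to $\bdry$. Then
\[
\psi^\top M\psi=\norm{\Lap\tilde\psi}_2^2+\lambda_1\,\tilde\psi^\top\LG\tilde\psi
=\norm{\Lap\tilde\psi}_2^2+\lambda_1\sum_{(u,v)\in E_\delta}w_{uv}(\tilde\psi_v-\tilde\psi_u)^2 ,
\]
using \cref{eq:dirichlet-energy}. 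Both terms are nonnegative, so $M$ is PSD. It remains to show that $\psi^\top M\psi=0$ forces $\psi=0$. Since $\lambda_1>0$, it suffices to use the second term alone.

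Suppose the Dirichlet energy of $\tilde\psi$ vanishes. Every retained edge has $w_{uv}=F_{uv}>0$, so $\tilde\psi_u=\tilde\psi_v$ across every edge of $E_\delta$. Hence $\tilde\psi$ is constant on each connected component of the undirected support. By hypothesis every interior node $i$ has a directed path in $G_\delta$ to some boundary node $s$. That path is in particular an undirected path, so $\tilde\psi_i=\tilde\psi_s=0$. Therefore $\psi=0$ and $M\succ0$. Uniqueness of the interior minimizer of \cref{eq:sobolev-obj} then follows: after substituting the Dirichlet data, the objective is a quadratic in $\phi_\interior$ with Hessian $M$, so it is strictly convex and its minimizer is the unique solution of $M\phi_\interior=r$ for the appropriate effective right-hand side $r$.

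I expect the only delicate point to be bookkeeping, not the argument itself. One must check that the reduced Hessian really is the principal submatrix $(\Lap^\top\Lap)_{\interior\interior}$, so that the Dirichlet-block coupling only enters $r$. One must also make the hypothesis ``every interior node reaches $\bdry$'' match connectivity in the undirected support on which $\LG$ is built. Positive conductances must be stated explicitly, since zero-weight edges would break the connectivity step.

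For the conditioning remark, Weyl's inequality gives $\lambda_{\min}(M)\ge\lambda_{\min}\bigl((\Lap^\top\Lap)_{\interior\interior}\bigr)+\lambda_1\lambda_{\min}\bigl((\LG)_{\interior\interior}\bigr)$. The second summand is strictly positive by the argument above; it is the grounded-Laplacian eigenvalue. So the smallest eigenvalue is lifted even where the data term is singular, as on saturated chains (\cref{lem:saturation}). The gauge statement is \cref{prop:gauge}: $\LG\onevec=0$, so the added term is flat along the constant mode of the full problem.
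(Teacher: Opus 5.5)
Your proof is correct, but it puts the positive definiteness in the opposite block from the paper.

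\textbf{The paper's route.} It shows that the directed-path hypothesis makes $\Lap_{\interior\interior}$ injective. So the data block $(\Lap^\top\Lap)_{\interior\interior}$ is already positive definite, and adding the PSD penalty block keeps it so.

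\textbf{Your route.} You treat the data block as merely PSD and show that $(\LG)_{\interior\interior}$ is a grounded Laplacian. It is positive definite because zero Dirichlet energy forces $\tilde\psi$ to agree across every edge, and every interior node is joined to a node of $\bdry$ where $\tilde\psi=0$.

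\textbf{What yours buys.} It is more elementary. The paper's injectivity step really needs a maximum-principle or topological-order (triangularity) argument for the directed operator, which its phrase ``constant on each interior component'' only gestures at. It is also more general, since you use the directed path only as an undirected path. For $\lambda_1>0$ it suffices that every interior node be connected to $\bdry$ in the undirected support. A consequence is that losing directed reachability does not by itself make $M$ singular, contrary to the remark after the theorem. Singularity requires an undirected component of the support containing no boundary node. Your Weyl bound $\lambda_{\min}(M)\ge\lambda_1\lambda_{\min}\bigl((\LG)_{\interior\interior}\bigr)>0$ also gives real content to the conditioning clause, which the paper's proof does not address. You are right to phrase it as a lift of $\lambda_{\min}$: $\lambda_{\max}$ grows as well, so the condition number itself need not fall when the data block is already well conditioned.

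\textbf{What the paper's buys.} It gives nonsingularity already at $\lambda_1=0$, that is, well-posedness of the unregularized reduced problem. It also explains why the hypothesis is phrased with directed paths: directed reachability is exactly what the data block needs. That is why nodes lacking it are ``undetermined by the data,'' even though, by your argument, the penalty still determines them whenever they are connected to $\bdry$ in the undirected support.
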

Under this hypothesis $M$ is SPD and a Cholesky factorization solves
\cref{eq:sobolev-obj} directly. In practice, however, many interior nodes have no
path to a sink, so the hypothesis fails and $M$ is only positive
\emph{semi}definite; we therefore solve by a symmetric eigensolve that inverts
only the nonzero spectrum, giving the minimum-norm solution. This keeps the gauge
mode (in the range of $M$) exact and leaves undetermined nodes at the smooth
default, excluded from scoring (\cref{sec:sob-empirics}); a Cholesky solve with a
diagonal floor would instead amplify roundoff along the near-null directions.

\subsection{Range preservation on chains}
\label{sec:sob-range}
\begin{theorem}[Range preservation, exact]
\label{thm:range}
On the length-$L$ constant-conductance chain of \cref{prop:tik-collapse}, with
the trace-normalized Dirichlet-energy penalty (\cref{sec:sob-trace}), the
graph-Sobolev estimator has interior dynamic range exactly
\begin{equation}
  \Delta\phihat_{\mathrm{Sob}}(\lambda_1)
  = \frac{2L+1}{\,2L+1+2\lambda_1\,}
  = \frac{1}{1+\lambda_1\,\kappa(L)},
  \qquad \kappa(L)=\frac{2}{2L+1}=O(1/L).
  \label{eq:range-bound}
\end{equation}
Hence for any fixed $\lambda_1$ the retained range tends to the full harmonic
range $\Delta\phitrue=1$ as $L\to\infty$. Contrasted with
\cref{prop:tik-collapse}, the Sobolev range depends on $\lambda_1$ only through
$\lambda_1/(2L+1)$, whereas ridge collapses like $1/\lambda_1$ independently of
$L$: a longer chain helps the Dirichlet-energy estimate and never helps ridge.
\end{theorem}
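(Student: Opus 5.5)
The plan is to reduce the interior problem on the chain of \cref{prop:tik-collapse} to a one-dimensional minimization over the dynamic range, and then solve that scalar problem in closed form. First I will write the chain explicitly. The nodes are $v_1\to\cdots\to v_L\to s$ with constant conductance $w$, the sink pinned at $\phi(s)=1$, and unit divergence injected at the free source $v_1$. I will assemble the directed Laplacian $\Lap$ and the symmetric $\LG$ in block form. Then I write the reduced objective of \cref{def:sobolev} in the interior unknowns $\phi_\interior$, with the boundary term $\Lap_{\interior\bdry}\phi_\bdry$ moved to the right-hand side as in \cref{sec:poisson-dirichlet}. \Cref{thm:spd} applies, since every chain node has a directed path to $s$. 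So $M$ is SPD and the minimizer is the unique solution of $M\phi_\interior=r$. It therefore suffices to exhibit one stationary point.

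The key structural step is to show that the minimizer is an affine ramp, $\phi(v_k)=1-\Delta\,(L-k)/L$ or the analogous parametrization by the range $\Delta$. I would verify this with an ansatz. On a constant-conductance chain, $\LG$ and the interior rows of $\Lap^\top\Lap$ act on affine sequences as second- and fourth-order difference operators, and both annihilate affine sequences in the bulk. The normal equations $M\phi_\interior=r$ are therefore satisfied identically at interior indices for any ramp. Only the end rows (the source row carrying the unit divergence, and the row adjacent to the pinned sink) impose conditions, and these should collapse to a single linear equation in $\Delta$. If the end rows do not close with a pure ramp, my fallback is to add a boundary-layer correction and check that it does not change $\max-\min$. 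I do not expect this to be necessary.

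Next I evaluate both terms on the ramp family. The data term $\tfrac12\norm{\Lap\phi-b}^2$ should be a quadratic $\tfrac12\,\alpha(L)(\Delta-1)^2$ that vanishes at the harmonic ramp $\Delta=\Delta\phitrue=1$. The Dirichlet energy is $w\,\Delta^2/L$ summed over the $L$ edges. After the trace normalization of \cref{sec:sob-trace}, i.e.\ dividing $\LG$ by its trace, which is linear in $L$, the penalty becomes $\lambda_1\beta(L)\Delta^2$. Minimizing over $\Delta$ gives $\Delta=\alpha/(\alpha+2\lambda_1\beta)$. The target identity \cref{eq:range-bound} then amounts to the ratio claim $\beta(L)/\alpha(L)=1/(2L+1)$. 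Equivalently, after normalization the data curvature counts the $2L+1$ nonzero entries contributing to $\Lap$ on the chain, while the penalty curvature is a constant. Rewriting $\Delta$ as $1/(1+\lambda_1\kappa(L))$ with $\kappa(L)=2/(2L+1)$ and taking $L\to\infty$ gives the final limit claims. The comparison with \cref{prop:tik-collapse} then follows by inspection.

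I expect the main obstacle to be the bookkeeping that produces exactly $2L+1$. This requires fixing the precise conventions: whether the source injection and sink coupling contribute one or two rows, and what normalizing constant the trace convention uses. These constants must combine to $\alpha/\beta=2L+1$ and not something like $2L-1$. I would first compute the cases $L=1$ and $L=2$ by hand to pin down the conventions against the formula, and then carry out the general-$L$ ramp computation.
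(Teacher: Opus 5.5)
Your overall strategy (exhibit a stationary point, then invoke uniqueness via \cref{thm:spd}) is sound, but the candidate is wrong. The minimizer on this chain is \emph{not} an affine ramp, so the end-row verification you plan cannot close.

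Work in the edge gaps $d_i=\phi_i-\phi_{i+1}$, $i=0,\dots,L$, on the theorem's chain (interior nodes $0,\dots,L$, sink $L+1$ pinned at $1$). The reduced residual $\Lap_{\interior\interior}x-b^{\mathrm{eff}}$ is exactly $(d_0+1,d_1,\dots,d_L)$, and $\phi^\top\LG\phi=\sum_{i=0}^{L}d_i^2$. Set $\mu=2\lambda_1/(2L+1)$. The normal equation at node $0$ reads $(1+\mu)d_0=-1$, and the one at node $1$ reads $(1+\mu)(d_1-d_0)=1$. Any profile that is affine through node $1$ (i.e.\ $d_0=d_1$) violates the latter by exactly $1$, whatever its slope. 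So the end rows give two incompatible conditions, not one linear equation in $\Delta$.

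Propagating the equations from the source forces $d_1=\cdots=d_L=0$ and $d_0=-(2L+1)/(2L+1+2\lambda_1)$. The minimizer is therefore a step: flat at the sink value on $\{1,\dots,L+1\}$, with the entire drop on the source edge. This is the saturation of \cref{lem:saturation}. The same structure refutes your data-term claim. The residual in row $i\ge1$ is the gap $d_i$ itself, so the data term vanishes only at the step $\phi_0=0$, $\phi_1=\cdots=\phi_{L+1}=1$, which lies outside your ramp family once $L\ge2$. Your fallback also misreads the situation: the ``boundary-layer correction'' would have to flatten the whole ramp.

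This error is easy to miss numerically. If you run the restricted computation with your parametrization (flat across the sink edge) on the theorem's chain, the scalar minimization over $\Delta$ \emph{does} return $(2L+1)/(2L+1+2\lambda_1)$. That is a coincidence of this quadratic. Write the gaps as $d=-Ru$ with $u$ uniform on the first $n$ edges, so that $R$ is the total drop. Then $u_0=\norm{u}_2^2=1/n$, and the optimal drop along that ray, $u_0/\bigl((1+\mu)\norm{u}_2^2\bigr)=1/(1+\mu)$, is independent of $n$; the true step is the case $n=1$. A ramp that keeps its slope through the sink edge would instead pick up an extra factor $L/(L+1)$. Agreement with the formula at small $L$ therefore does not validate the ansatz. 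Minimizing over a family that does not contain the minimizer proves nothing about the minimizer's range.

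The missing idea is the gap substitution itself. It diagonalizes both $\norm{\Lap_{\interior\interior}x-b^{\mathrm{eff}}}_2^2$ and the Dirichlet energy, so the objective separates coordinatewise and the minimizer and its range are read off directly, with no ansatz.

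It also settles your bookkeeping worry:
\begin{itemize}
\item The theorem's chain has $L+1$ interior nodes $0,\dots,L$; your chain $v_1\to\cdots\to v_L\to s$ would give $2L-1$ instead of $2L+1$.
\item The $2L+1$ is $\Tr[(\LG)_{\interior\interior}]$, i.e.\ the penalty's normalizer, not a count of data curvature.
\item The factor $2$ on $\lambda_1$ comes from the $\tfrac12$ on the data term, equivalently the paper's normalization $\norm{b^{\mathrm{eff}}}_2^2=2$.
\end{itemize}
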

On a length-$30$ chain, \cref{eq:range-bound} gives
$\Delta\phihat_{\mathrm{Sob}}=61/63=0.968$ at $\lambda_1=1$ and $61/81=0.753$
at $\lambda_1=10$, versus $0.536$ and $0.091$ for ridge
(\cref{fig:chain}, \cref{app:proofs}).

\subsection{Trace normalization and cost}
\label{sec:sob-trace}
To make $\lambda_1$ comparable across graphs, we normalize the penalty block
by its trace, $\lambda_1(\LG)_{\interior\interior}\mapsto
\lambda_1(\LG)_{\interior\interior}/\Tr[(\LG)_{\interior\interior}]$; all
sweeps use this. The dense solve is $O(\lvert\interior\rvert^3)$: Cholesky would
suffice under \cref{thm:spd} ($M$ SPD), but real extractions violate that
hypothesis, so we use the symmetric eigensolve of \cref{sec:sob-spd} throughout
to return the minimum-norm solution. It is negligible for the tens-to-hundreds of
interior states here; a conjugate-gradient solve at
$O(\kappa(M)\,\mathrm{nnz})$ is available when $\lvert\interior\rvert$ is large.

\subsection{Parameter-insensitivity: preservation versus collapse}
\label{sec:sob-empirics}
We compare the two penalties on the synthetic instrument of \cref{sec:exp}
(planted harmonic $\phitrue$, $\lvert V\rvert=277$, five sinks), scoring on the
\emph{determined} interior: the $n=95$ well-visited states ($\ge100$ visits)
with a directed path to the boundary (nodes without one are undetermined by the
data, \cref{thm:spd}). Rank correlation is computed on estimates rounded to
numerical tolerance so the saturated near-zero cluster (\cref{lem:saturation})
ties deterministically rather than by roundoff.
\Cref{fig:sweep,tab:sweep} report the headline---robustness to $\lambda_1$, not a
single best setting. At $\lambda_1=0$ both estimators recover the planted
ordering ($\spear=+0.81$, $r=+0.76$); as $\lambda_1$ grows, graph-Sobolev holds
this unchanged (rank correlation constant at $+0.807$ across
$\lambda_1\in[10^{-3},10]$, $r\in[+0.76,+0.85]$), while ridge \emph{inverts} the
instant regularization is applied, dropping to a negative plateau near $-0.42$
(linear correlation crossing zero). The ${\approx}+1.23$ gap is itself \emph{flat
in $\lambda_1$}---no ridge setting recovers the lost structure (the pointwise
collapse is \cref{fig:scatter}, \cref{app:figures}). The same holds for
\emph{top-$k$ power}: NDCG@$5$ stays in $[0.97,1.0]$ across the whole $\lambda_1$
range under Sobolev but drops to $0.76$ for ridge (\cref{tab:sweep}), so
parameter-insensitivity shows in top-of-ranking quality, not only correlation.

\begin{table}[t]
\centering
\caption{Recovery and top-$5$ ranking power versus $\lambda_1$ (determined
interior, $n=95$; HHD extraction). The gauge-invariant penalty is insensitive to
$\lambda_1$ (rank correlation constant at $+0.81$ and NDCG@$5$ in $[0.97,1.0]$
across four orders of magnitude), while ridge \emph{inverts} the ordering for
every $\lambda_1>0$ (Spearman to ${\approx}-0.42$, NDCG@$5$ from $0.97$ to
$0.76$). Metrics vs.\ the planted $\phitrue$ (graded relevance for NDCG); Spearman
uses estimates rounded to tolerance so the saturated near-zero cluster does not
break ties by roundoff.}
\label{tab:sweep}
\begin{tabular}{r ccc ccc}
\toprule
& \multicolumn{3}{c}{graph-Sobolev} & \multicolumn{3}{c}{Tikhonov}\\
\cmidrule(lr){2-4}\cmidrule(lr){5-7}
$\lambda_1$ & Spearman & Pearson & NDCG@5 & Spearman & Pearson & NDCG@5\\
\midrule
$0$        & $+0.810$ & $+0.758$ & $0.966$ & $+0.810$ & $+0.758$ & $0.966$\\
$10^{-3}$  & $+0.807$ & $+0.760$ & $0.966$ & $-0.450$ & $-0.055$ & $0.756$\\
$10^{-2}$  & $+0.807$ & $+0.767$ & $0.982$ & $-0.424$ & $-0.005$ & $0.756$\\
$10^{-1}$  & $+0.807$ & $+0.783$ & $0.982$ & $-0.418$ & $+0.001$ & $0.756$\\
$1$        & $+0.807$ & $+0.831$ & $0.999$ & $-0.422$ & $+0.001$ & $0.756$\\
$10$       & $+0.807$ & $+0.849$ & $0.982$ & $-0.551$ & $+0.001$ & $0.756$\\
\bottomrule
\end{tabular}
\end{table}

\begin{figure}[t]
\centering
\includegraphics[width=0.74\linewidth]{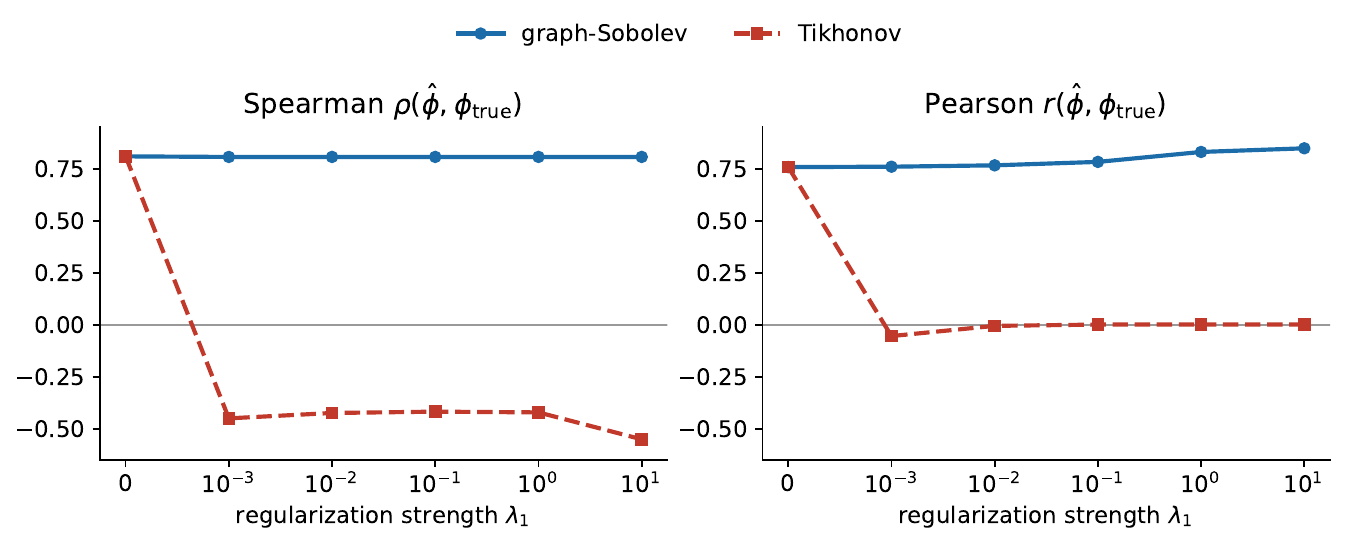}
\caption{Recovery of the planted potential versus $\lambda_1$. Graph-Sobolev
(blue) preserves both rank and linear correlation across four orders of
magnitude; Tikhonov (red) inverts the rank correlation to ${\approx}-0.42$ and
collapses the linear correlation to zero. Determined interior, $n=95$.}
\label{fig:sweep}
\end{figure}

\section{Boundary Identification from Flow Geometry}
\label{sec:sink}

When the absorbing boundary is unknown, ranking states by the recovered
potential fails on multi-sink graphs: with several sinks pinned at the same
value, the interior potential is rank-deficient and sinks do not separate from
high-potential interior states. The \emph{Poisson residual}
$r_u=\lvert[\Lap\phi]_u-b_u\rvert$ succeeds instead.
\begin{theorem}[Residual concentration at absorbing nodes]
\label{thm:residual}
Let $u$ be an absorbing sink with no outgoing DAG edges. Then $[\Lap\phi]_u=0$
exactly, so $r_u=\lvert b_u\rvert$ equals the empirical inflow at $u$. If every
interior node satisfies the Poisson equation to tolerance $\eta$ and
$\min_{u\in\bdry}\lvert b_u\rvert>\eta$, then every sink ranks strictly above
every interior node under $r$.
\end{theorem}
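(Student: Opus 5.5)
The plan has two parts: an exact computation at the sink, then a ranking comparison. For the first part, I would unpack the definition of the directed Laplacian $\Lap$ from the flux law \cref{eq:constitutive-intro} and the divergence $b_i=\sum_j q_{ji}-\sum_j q_{ij}$. The key observation is which edges row $u$ of $\Lap$ actually sees. In the forward operator used on the support $G_\delta$, the row of an absorbing node carries only its outgoing conductances. This matches the construction of $A$, where sinks are given absorbing (zero) rows, so the $u$-th row of $\Lap$ is
\[
[\Lap\phi]_u=\sum_{j:(u,j)\in E_\delta}W_{uj}(\phi_j-\phi_u).
\]
If $u$ has no outgoing DAG edges, this sum is empty. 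Hence $[\Lap\phi]_u=0$ for every $\phi$, whatever the solver returned. It follows that $r_u=\lvert 0-b_u\rvert=\lvert b_u\rvert$. Because $u$ has no out-flow, $b_u=\sum_jF_{ju}$ is exactly the empirical inflow.

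The main obstacle is fixing the convention for $\Lap$ so that the sink row really vanishes. If instead one used the convention where row $i$ aggregates the in-flux $\sum_j q_{ji}$, the sink row would not be zero. I would therefore state explicitly that $\Lap$ is assembled from the out-neighbour (transition) structure, consistent with the substochastic operator $A$ of \cref{prop:nilpotent}. Under that convention, $\Lap=D_{\mathrm{out}}(A-I)$ restricted appropriately, and $A$ has zero rows at sinks. I would also record the Dirichlet interpretation: the Dirichlet solve pins $\phi_u$ without imposing the equation at $u$, so nothing forces the residual there to be small. The boundary nodes simply absorb the unbalanced divergence.

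The ranking claim is then a direct comparison. By hypothesis, every interior $v$ satisfies $r_v=\lvert[\Lap\phi]_v-b_v\rvert\le\eta$. For every sink $u\in\bdry$, the first part gives
\[
r_u=\lvert b_u\rvert\ge\min_{w\in\bdry}\lvert b_w\rvert>\eta\ge r_v.
\]
So $r_u>r_v$ for every sink–interior pair, meaning every sink ranks strictly above every interior node under $r$. This holds irrespective of how ties among the sinks themselves are broken.
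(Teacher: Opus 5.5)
Your proof is correct and is essentially the paper's argument. The sink row of the out-neighbour Laplacian is an empty sum, so $r_u=\lvert b_u\rvert$, and $r_u\ge\min_{w\in\bdry}\lvert b_w\rvert>\eta\ge r_v$ gives the strict ranking. Explicitly fixing the out-neighbour convention is a worthwhile addition, since substituting the flux law literally into $b_i=\sum_j q_{ji}-\sum_j q_{ij}$ would also bring in-edges into row $u$; your sign $\Lap=D_{\mathrm{out}}(A-I)$ is opposite to the paper's $(\Lap\phi)_i=\phi_i-\phi_{i+1}$, which is immaterial here because the sum is empty.
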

A sink is where mass accumulates without a modeled outflow: the forward operator
predicts zero net flux but the data shows large inflow, and that mismatch is the
residual; interior nodes, where the solve balances flux, have small residual. On the
synthetic instrument ($\lvert V\rvert=277$, five planted sinks) the residual
ranks all five true sinks in the top seven states (worst rank $7$), while
potential ranking scatters them to worst rank $108$; the separation is
invariant across abandonment rates $\{0.05,0.20,0.50\}$, making the residual a
regime-invariant boundary detector. \Cref{thm:residual} is proved in
\cref{app:proofs}.

\section{Synthetic Validation}
\label{sec:exp}

We validate on a controlled instrument: recovery can only be scored against a
known potential, which no observational corpus provides. The instrument plants a
ground-truth potential, samples flow from it, and hands the algorithm only the
flow; every number comes from the released generator at one fixed configuration.

\paragraph{The instrument.}
\label{sec:exp-instrument}
The generator builds a multi-branch conversion funnel: a source feeds shared
upper states; each of $K=5$ branches is a chain terminating in a conversion
sink; one abandon sink absorbs drop-off at a depth-decaying rate. The
ground-truth potential is the harmonic extension under
$\{\text{conversion}=1,\text{abandon}=0\}$, i.e.\
$\phitrue(u)=\Pr[\text{absorbed at a conversion sink before abandon}\mid
\text{start at }u]$, exactly the object the Poisson solve targets. The default point uses $20$ shared states, $50$ interior
states per branch, abandonment $0.05$, and $50{,}000$ sessions, giving
$\lvert V\rvert=277$ and terminal-sink entropy $H(\pi)\approx2.27$ bits;
sessions are reproducible bit-for-bit at a given seed.

\paragraph{Structural certificates and recovery.}
\label{sec:exp-structure}\label{sec:exp-recovery}
The extracted operator is nilpotent ($\norm{A^m}_F=0$ at the predicted $m=7$),
confirming \cref{prop:nilpotent} (no residual cycle). The Hodge gradient
component $\phi_0$ alone does \emph{not} recover the planted direction
(conductance-weighted cosine $-0.06$): an honest negative showing the gradient
projection is a structural device, not an estimator. Recovery is the regularized
solve's job (\cref{sec:sob-empirics}): graph-Sobolev holds rank correlation
$+0.81$ while ridge \emph{inverts} to ${\approx}-0.42$, and preserves $97\%$ of
the chain range against ridge's $54\%$ (\cref{fig:chain}), independent of the
Hodge step (\cref{sec:exp-ablation}).

\subsection{Does the Hodge projection matter? An extraction ablation}
\label{sec:exp-ablation}
We ask whether any acyclic support---in particular a cheap topological sort---
serves as well as the Hodge projection, replacing it with two
dominance-thresholded topological sorts, the rest of the pipeline fixed. The
orders come from data alone: \emph{visit-position}
(mean fractional session position) and \emph{net-inflow}
($\sum_v(F_{vu}-F_{uv})$); an edge is kept when net-forward, dominant, and
order-consistent (acyclic by construction).

\Cref{tab:ablation} reports recovery under each extraction. Two conclusions
follow. First, the regularizer contrast, the paper's central claim, is
\emph{invariant to the extraction}: graph-Sobolev recovers a strong positive
ordering under all three supports while ridge \emph{inverts} under all three
($\spear=-0.42,-0.32,-0.56$; margin $+1.23$ to $+1.52$). Second, Hodge is
\emph{not} the best extraction for recovery: both topological sorts give higher
Sobolev rank correlation ($+0.96,+0.99$) than Hodge ($+0.81$). We treat the
topological sort as the default, keeping Hodge for its generality (only the flow
matrix) and the nilpotency certificate, not accuracy. Either way, the
contribution is isolated in the regularizer, not a Hodge artifact.

\begin{table}[t]
\centering
\caption{Extraction ablation ($\lambda_1=1$, determined interior: $n=95$ for
Helmholtz--Hodge, $n=150$ for the topological sorts, which reach the boundary
from every well-visited state). The regularizer contrast holds under every
acyclic support: graph-Sobolev recovers a strong positive ordering while ridge
inverts. Hodge is not required for recovery; all supports are exactly acyclic.}
\label{tab:ablation}
\begin{tabular}{l c cc c}
\toprule
& & \multicolumn{2}{c}{Spearman vs.\ $\phitrue$} & \\
\cmidrule(lr){3-4}
Extraction & edges & graph-Sobolev & Tikhonov & margin\\
\midrule
Helmholtz--Hodge (\cref{sec:dag}) & $185$ & $+0.807$ & $-0.422$ & $+1.23$\\
Topological sort (visit-position) & $664$ & $+0.995$ & $-0.324$ & $+1.32$\\
Topological sort (net-inflow)     & $375$ & $+0.961$ & $-0.556$ & $+1.52$\\
\bottomrule
\end{tabular}
\end{table}

\subsection{Regime dependence and scope}
\label{sec:exp-regime}
The \emph{magnitude} of the recoverable correlation is a regime property, not the
regularizer's: at low abandonment the planted field is compressed and the
unregularized solve recovers $\spear\approx+0.81$, while as abandonment rises
the well-visited interior shrinks and recovery degrades (to $\spear\approx+0.23$
at abandonment $0.5$). The regularizer's role is invariant throughout: graph-Sobolev
\emph{preserves} whatever ordering the unregularized solve attains and ridge
\emph{degrades} it; what is regime-dependent is whether there is signal to
preserve.

We confirm this is not an artifact of the single configuration of
\cref{tab:sweep}: across twelve instrument variants---five seeds plus
single-axis perturbations of branch count, chain depth, abandonment, and sink
entropy---the Sobolev-minus-Tikhonov rank margin at $\lambda_1=1$ is positive in
\emph{all twelve} under both extractions (\cref{tab:robustness},
\cref{app:figures}). On the topological-sort support the ordering is
configuration-invariant ($\spear=+0.99\pm0.00$ vs.\ ridge $-0.34\pm0.06$); on
the Hodge support the Sobolev correlation tracks the regime (down to $+0.23$ at
abandonment $0.5$) yet stays positive, with margin never below $+0.45$.

\subsection{Scaling to \texorpdfstring{$10^4$}{10000} nodes}
\label{sec:exp-scaling}
Because the reduced system is symmetric PSD (\cref{thm:spd}), the solve admits
sparse conjugate gradients (CG), which converge to the same minimum-norm
estimate as the dense path. On a planted funnel with $\lvert V\rvert=10{,}202$
(roughly $10^4$ nodes),
graph-Sobolev recovers $\spear=+0.94$ while ridge inverts to $-0.18$, and CG
returns in $0.08$\,s where the dense solve is killed for memory.
\Cref{tab:scaling} (\cref{app:figures}) times CG against the dense eigensolve: CG
matches it to $10^{-9}$, scales near-linearly, and handles $4\times10^4$ nodes in
$0.39$\,s (dense exhausts memory beyond ${\sim}4{,}000$), using the
topological-sort extraction of \cref{sec:exp-ablation}.

\section{Real-Data Validation}
\label{sec:real}

No clickstream corpus carries a ground-truth potential, so we validate claims
that need no planted field: the recovered potential is interpretable
(\cref{sec:real-interp}); the regularizer contrast holds \emph{internally}
(\cref{sec:real-contrast}); and the residual concentrates at the known boundary
while $\phi$ is stable under resampling (\cref{sec:real-stability}).
We use three public corpora, each reduced to an event-type state space
(\cref{sec:dag-flow}) and summarized in \cref{tab:real-datasets}:
\textbf{RetailRocket} (a tiny funnel), \textbf{Trivago} (a medium action graph),
and \textbf{OTTO} (a large multi-sink graph).

\begin{table}[t]
\centering
\caption{Real corpora, reduced to event-type state spaces. Conversion is the
fraction of sessions terminating at a conversion sink.}
\label{tab:real-datasets}
\begin{tabular}{l r r r l}
\toprule
Dataset & Sessions & States & Conversion & Conversion sink\\
\midrule
RetailRocket & $41{,}184$  & $4$   & $33.0\%$ & \texttt{transaction}\\
Trivago      & $640{,}728$ & $11$  & $92.6\%$ & \texttt{clickout item}\\
OTTO         & $186{,}271$ & $105$ & $27.3\%$ & \texttt{orders}\\
\bottomrule
\end{tabular}
\end{table}

\subsection{Recovered potentials are interpretable}
\label{sec:real-interp}
The graph-Sobolev potential ($\lambda_1=1$) reproduces the known funnel ordering
(\cref{fig:real-potentials}, \cref{app:figures}). On RetailRocket,
\texttt{view} ($-0.60$) sits below the abandon origin and \texttt{addtocart}
($+0.32$) between the boundaries; on Trivago, clickout is highest and searches
lowest, with one flagged anomaly (\texttt{interaction item image}, $-1.52$).
On OTTO (the only genuinely multi-sink graph) the solve separates \texttt{carts}
(mean $+0.26$, straddling the cart sink) from \texttt{clicks} (mean $-0.01$),
recovering $\text{clicks}<\text{carts}<\text{orders}$ from flow alone.

\subsection{The regularizer contrast holds without ground truth}
\label{sec:real-contrast}
With no truth, we measure how each penalty at $\lambda_1=1$ perturbs the
unregularized solve along two axes: interior \emph{ordering} (Spearman vs.\ the
base) and \emph{dynamic range} (fraction retained). \Cref{tab:real-contrast,fig:real-contrast} reproduce the
synthetic finding cleanly: graph-Sobolev retains $28$--$41\%$ of the range
across all three corpora, while ridge retains $21\%$, $5\%$, and, on the
$105$-state OTTO graph, just $0.2\%$. The contrast sharpens with graph size; on
OTTO ridge retains essentially none (rank agreement $0.42$ vs.\
graph-Sobolev's $0.66$), as \cref{prop:tik-collapse} predicts.

\subsection{Residual concentration and stability}
\label{sec:real-stability}
The Poisson residual ranks the known conversion sink in the top one to three of
up to $105$ states (RetailRocket $3/4$, Trivago $1/11$, OTTO $2/105$), confirming
\cref{thm:residual}; being single-sink, these corpora do not exercise its
multi-sink advantage over potential ranking. Bootstrap resampling ($20$
resamples) gives small coefficients of variation on well-visited states (median
$0.010$, $0.021$, $0.127$); $\phi$ is stable wherever the divergence is
well-sampled.

\subsection{The recovered potential is a usable node feature}
\label{sec:real-downstream}
Does the potential carry \emph{downstream} signal? We predict OTTO session
conversion with a logistic regression on four per-session order statistics
---mean, max, min, last---of the interior states' potentials; the task is
leakage-guarded, as the label is the terminal sink, the features exclude every
sink, and the potential is fit on the training sessions only
(\cref{app:downstream}). The graph-Sobolev potential adds $+3.1$ ROC-AUC
points over raw session statistics ($0.828\!\to\!0.859$; \cref{tab:downstream})
while ridge adds nothing ($0.829$); standalone, the $4$-D Sobolev feature scores
$0.836$ against ridge's $0.717$. The gauge-invariant penalty yields a usable
(if compressed) node feature where the collapsed ridge potential does not
(\cref{tab:downstream}).

\section{Conclusion}
\label{sec:conclusion}

We showed that the standard regularizer for directed Poisson inverse problems
fails specifically: the ridge (identity-seminorm) penalty imposes a
meaningless origin and, as its strength grows, collapses the dynamic range and
reverses the ordering, so the only safe setting is no regularization. The
gauge-invariant Dirichlet energy removes the hazard, with one defining
consequence, \emph{parameter-insensitivity}: the recovered potential is stable
across four orders of magnitude in $\lambda_1$, whereas ridge inverts the
ordering for every $\lambda_1>0$. We proved gauge invariance (\cref{prop:gauge}),
an SPD reduced system (\cref{thm:spd}), and exact range preservation on chains
(\cref{thm:range}); gave a Poisson-residual boundary diagnostic; and positioned
the contribution against general-form Tikhonov and graph smoothness priors. The
penalty is classical, but its necessity and parameter-insensitivity here are not.
The guarantee is preservation, not signal: whatever ordering the data supports,
the gauge-invariant penalty keeps it across $\lambda_1$ and ridge does not.

\paragraph{Future work.}
Open directions: characterizing when the divergence carries enough of the
harmonic field for recovery (a theorem for the regime dependence of
\cref{sec:exp-regime}); nonlinear constitutive laws; edge-preserving
gauge-invariant penalties (total variation, the $p$-Laplacian); and which acyclic
supports best aid recovery (\cref{sec:exp-ablation}). All experiments, figures, and tables are reproduced
by the released code from a single fixed-seed command.

\bibliographystyle{plainnat}
\bibliography{refs}

\appendix
\section{Extraction Mechanics}
\label{app:extract}

This appendix gives the edge-retention rule deferred from \cref{sec:dag}, for
both orientations. Write the oriented coordinate as $\psi$: for the topological
sort $\psi$ is the mean visit position (or net inflow); for the Helmholtz--Hodge
projection $\psi=\phi_0$, the gradient potential solving
$G^\top G\,\phi_0=G^\top\omega$. With the gradient-dominance ratio
\begin{equation}
  \delta(u,v)=\frac{\lvert\psi(v)-\psi(u)\rvert}{\lvert\omega_{uv}\rvert+\epsilon},
  \qquad \omega_{uv}=F_{uv}-F_{vu},
\end{equation}
the retained edge set is
\begin{equation}
  E_\delta=\bigl\{(u,v): \psi(v)>\psi(u)\ \text{and}\ \delta(u,v)\ge\tau\bigr\},
  \label{eq:retain}
\end{equation}
followed by top-$k$ pruning per source (retain each node's $k$ heaviest outgoing
edges). The threshold $\tau$ trades support density against strictness: a larger
$\tau$ keeps only edges whose oriented drop dominates the raw flow, and on small
dense graphs (for example the eleven-state event graph of \cref{sec:real}) a
lower $\tau$ is needed to avoid disconnecting interior states. All reported runs
use $\rho=2$, $k=10$, and $\tau$ as stated per experiment.

\section{Proofs}
\label{app:proofs}

\paragraph{The chain instrument.}
Both proofs use the chain implemented in the released code. The nodes are
$0,1,\dots,L+1$ with unit-conductance edges $i\to i+1$ for $i=0,\dots,L$. The
sink $L+1$ is the only Dirichlet node, pinned at $\phi_{L+1}=1$; unit
divergence is injected at the source and absorbed at the sink,
\[
  b_0=-1,\qquad b_{L+1}=+1,\qquad b_i=0\ (1\le i\le L).
\]
The interior unknown is $x=(\phi_0,\dots,\phi_L)$. The directed Laplacian acts
as $(\Lap\phi)_i=\phi_i-\phi_{i+1}$ for $i=0,\dots,L$ (and $(\Lap\phi)_{L+1}=0$
since the sink has no out-edge). Writing the interior block $\Lap_{\interior
\interior}$ and eliminating the pinned sink gives the effective right-hand
side $b^{\mathrm{eff}}=b_\interior-\Lap_{\interior\bdry}\phi_\bdry$, which has a
$-1$ in coordinate $0$ and a $+1$ in coordinate $L$ and is zero elsewhere. It
is convenient to use the \emph{edge gaps}
\[
  d_i \;=\; \phi_i-\phi_{i+1},\qquad i=0,\dots,L,\qquad
  \text{so } \phi_i = 1+\textstyle\sum_{j=i}^{L} d_j ,
\]
since $\phi_{L+1}=1$. A short computation gives
$(\Lap_{\interior\interior}x-b^{\mathrm{eff}})_0=d_0+1$,
$(\,\cdot\,)_i=d_i$ for $1\le i\le L$, so
\begin{equation}
  \norm{\Lap_{\interior\interior}x-b^{\mathrm{eff}}}_2^2
  =(d_0+1)^2+\sum_{i=1}^{L} d_i^2,
  \qquad
  \phi^\top\LG\phi=\sum_{i=0}^{L} d_i^2 .
  \label{eq:chain-forms}
\end{equation}
The interior dynamic range is $\Delta\phihat=\max_{0\le i\le L}\phi_i
-\min_{0\le i\le L}\phi_i$.

\subsection{Proof of \texorpdfstring{\cref{prop:tik-collapse}}{Proposition 4}}
Ridge minimizes $\tfrac{1}{2}\norm{\Lap_{\interior\interior}x-b^{\mathrm{eff}}}_2^2
+\lambda_1\norm{x}_2^2$ (up to the positive data-normalization constant, which
only rescales $\lambda_1$), with normal equations $(A+\lambda_1 I)x=c$, where
$A=\Lap_{\interior\interior}^\top\Lap_{\interior\interior}\succeq 0$ and
$c=\Lap_{\interior\interior}^\top b^{\mathrm{eff}}$. Because $A\succeq 0$, the
matrix $(A+\lambda_1 I)^{-1}$ has spectral norm at most $1/\lambda_1$, so
\[
  \norm{x}_2=\norm{(A+\lambda_1 I)^{-1}c}_2\le \frac{\norm{c}_2}{\lambda_1},
  \qquad\text{hence}\qquad
  \Delta\phihat_{\mathrm{Tik}}\le 2\norm{x}_\infty\le 2\norm{x}_2
  \le \frac{2\norm{c}_2}{\lambda_1}.
\]
It remains to bound $\norm{c}_2$ independently of $L$. The matrix
$\Lap_{\interior\interior}$ is upper bidiagonal with $1$ on the diagonal and
$-1$ on the superdiagonal (and a lone $1$ in the last row), so column $j$ of
$\Lap_{\interior\interior}$ has entries $(\Lap_{\interior\interior})_{jj}=1$
and $(\Lap_{\interior\interior})_{j-1,j}=-1$. Thus
$c_j=(\Lap_{\interior\interior}^\top b^{\mathrm{eff}})_j
=b^{\mathrm{eff}}_j-b^{\mathrm{eff}}_{j-1}$, and with
$b^{\mathrm{eff}}=(-1,0,\dots,0,1)$ this gives
$c=(-1,\,1,\,0,\dots,0,\,1)^\top$, so $\norm{c}_2=\sqrt 3$ for every $L$.
Therefore $\Delta\phihat_{\mathrm{Tik}}(\lambda_1)\le 2\sqrt3/\lambda_1\to0$,
with a constant independent of $L$. \qed

\subsection{Proof of \texorpdfstring{\cref{thm:range}}{Theorem 2}}
With the trace normalization of \cref{sec:sob-trace}, the Sobolev objective is
\[
  f(x)=\frac{1}{2\,b_{\mathrm{sc}}}\Big[(d_0+1)^2+\sum_{i=1}^{L} d_i^2\Big]
  +\frac{\lambda_1}{2\,\tau_L}\sum_{i=0}^{L} d_i^2,
\]
using \cref{eq:chain-forms}, where $b_{\mathrm{sc}}=\norm{b^{\mathrm{eff}}}_2^2=2$
is the data-normalization constant and
$\tau_L=\Tr[(\LG)_{\interior\interior}]=1+2(L-1)+2=2L+1$ is the penalty trace
(node $0$ has degree $1$, nodes $1,\dots,L$ have degree $2$). The crucial point
is that $f$ is \emph{separable} in the gaps $d_0,\dots,d_L$: each $d_i$ appears
in exactly one data term and one penalty term. Minimizing term by term,
\[
  d_i^\star=0\ \ (1\le i\le L),
  \qquad
  d_0^\star=\arg\min_{d}\ \frac{(d+1)^2}{2b_{\mathrm{sc}}}
  +\frac{\lambda_1}{2\tau_L}d^2
  =\frac{-1}{\,1+ \lambda_1 b_{\mathrm{sc}}/\tau_L\,}
  =\frac{-(2L+1)}{\,2L+1+2\lambda_1\,},
\]
where we used $b_{\mathrm{sc}}=2$. Since $d_1^\star=\dots=d_L^\star=0$, the
recovered potential is constant on $\{1,\dots,L+1\}$ and drops by $|d_0^\star|$
at the source, so
\[
  \Delta\phihat_{\mathrm{Sob}}(\lambda_1)=|d_0^\star|
  =\frac{2L+1}{\,2L+1+2\lambda_1\,}
  =\frac{1}{1+\lambda_1\kappa(L)},\qquad \kappa(L)=\frac{2}{2L+1}.
\]
The bound $\tfrac{1}{1+\lambda_1\kappa(L)}\ge 1-\lambda_1\kappa(L)$ follows from
$\tfrac1{1+t}\ge 1-t$ for $t\ge0$, and $\Delta\phitrue=1$ is the
$\lambda_1=0$ value. As $L\to\infty$, $\kappa(L)\to0$ and
$\Delta\phihat_{\mathrm{Sob}}\to1$. \qed

\paragraph{Numerical confirmation.}
The released solver reproduces $\Delta\phihat_{\mathrm{Sob}}=(2L+1)/(2L+1+2\lambda_1)$
to four decimals at every tested $(L,\lambda_1)$ (for example $61/63=0.968$ at
$(L,\lambda_1)=(30,1)$ and $61/81=0.753$ at $(30,10)$), and confirms that
$\Delta\phihat_{\mathrm{Tik}}$ is independent of $L$ (identical to four decimals
for $L=30$ and $L=100$ at each $\lambda_1$), as \cref{prop:tik-collapse}
predicts.

\subsection{Remark on the empirical Tikhonov plateau}
\Cref{prop:tik-collapse} gives $\Delta\phihat_{\mathrm{Tik}}\to0$ but makes
no claim about the limiting rank correlation. On a graph with nonzero
interior divergence the collapsed estimate is not identically constant: the
$O(1/\lambda_1)$ leading term is $\propto c=\Lap^\top b^{\mathrm{eff}}$, whose
sign pattern reflects local divergence rather than global position between
boundaries. The rank correlation with $\phitrue$ therefore tends to a
regime-dependent negative plateau (empirically ${\approx}-0.42$ in
\cref{tab:sweep}), not to $-1$: the recovered order is anticorrelated with the
planted field but not its exact reverse. The linear correlation likewise crosses
to zero as the range collapses (\cref{tab:sweep}).

\subsection{Proofs of the structural results}
\label{app:struct-proofs}

\begin{proof}[Proof of \cref{prop:acyclic} (acyclicity)]
By \cref{eq:retain}, every retained edge $(u,v)$ has $\psi(v)>\psi(u)$ for the
oriented coordinate $\psi$. A directed cycle $v_1\to\cdots\to v_k\to v_1$ would
force $\psi(v_1)<\cdots<\psi(v_k)<\psi(v_1)$, a contradiction; hence $\psi$
strictly increases along every directed path and is a topological order.
\end{proof}

\begin{proof}[Proof of \cref{prop:nilpotent} (nilpotency)]
$(A^m)_{uv}$ sums over directed walks of length $m$ from $u$ to $v$. On a DAG
every walk is a path, of length at most $L$; hence for $m>L$ no such walk
exists and $A^m=0$.
\end{proof}

\begin{proof}[Proof of \cref{prop:dissipation} (minimum dissipation)]
Introduce a multiplier $\phi$ for the divergence constraint and form
$\mathcal L=\tfrac12\sum W_{ij}^{-1}q_{ij}^2+\phi^\top(\divg(q)-b)$.
Stationarity in $q_{ij}$ gives $W_{ij}^{-1}q_{ij}=\phi_i-\phi_j$ up to
orientation, i.e.\ $q_{ij}=W_{ij}(\phi_j-\phi_i)$; substituting into
$\divg(q)=b$ yields $\Lap\phi=b$.
\end{proof}

\begin{proof}[Proof of \cref{lem:saturation} (chain saturation)]
Zero divergence at $v_i$ means inflow equals outflow,
$W_{v_{i-1}v_i}(\phi_{v_{i-1}}-\phi_{v_i})=W_{v_iv_{i+1}}(\phi_{v_i}-\phi_{v_{i+1}})$,
which telescopes; with a single equal-weight in- and out-edge this forces
$\phi_{v_i}=\phi_{v_{i+1}}$ for all $i$, and pinning the terminal value
propagates $\phi_{v_i}=\phi(s)$ backward.
\end{proof}

\begin{proof}[Proof of \cref{prop:gauge} (gauge invariance)]
Since $\LG\onevec=0$,
$(\phi+c\onevec)^\top\LG(\phi+c\onevec)=\phi^\top\LG\phi+2c\,\onevec^\top\LG\phi
+c^2\onevec^\top\LG\onevec=\phi^\top\LG\phi$. By contrast
$\norm{\phi+c\onevec}_2^2=\norm{\phi}_2^2+2c\,\onevec^\top\phi+c^2\lvert V\rvert$
depends on $c$, with a unique minimizing gauge $c^\star=-\bar\phi$.
\end{proof}

\begin{proof}[Proof of \cref{thm:spd} (SPD reduced system)]
$M$ is symmetric. For $x\neq0$ on the interior, $\Lap_{\interior\interior}$ has
trivial null space: $\Lap_{\interior\interior}x=0$ makes $x$ constant on each
interior component, and the path-to-boundary hypothesis forces that constant to
match a pinned boundary value, so $x=0$; hence
$x^\top(\Lap^\top\Lap)_{\interior\interior}x=\norm{\Lap_{\interior\interior}x}_2^2>0$.
The Dirichlet block $(\LG)_{\interior\interior}$ is PSD, and a positive-definite
plus a PSD matrix is positive definite, so $x^\top Mx>0$.
\end{proof}

\begin{proof}[Proof of \cref{thm:residual} (residual concentration)]
For an absorbing sink $u$ with no out-edges,
$[\Lap\phi]_u=\sum_{j:(u,j)\in E}W_{uj}(\phi_u-\phi_j)$ has no terms, so it is
$0$ and $r_u=\lvert b_u\rvert$. For an interior node
$r_u=\lvert[\Lap\phi]_u-b_u\rvert\le\eta$ by hypothesis, and
$\min_\bdry\lvert b_u\rvert>\eta$ then orders all sinks above all interior
nodes.
\end{proof}

\section{Additional Figures and Tables}
\label{app:figures}

\begin{table}[h]
\centering
\caption{Solver scaling: sparse conjugate gradients (CG) versus the dense
eigensolve for the graph-Sobolev solve ($\lambda_1=1$) on layered DAGs of growing
interior size. CG matches the dense solve to $10^{-9}$ where both run, scales
near-linearly in edges, and handles $4\times10^4$ nodes in under half a second;
the dense solve exhausts memory beyond ${\sim}4{,}000$ interior nodes. This
backs the CG path of \cref{sec:exp-scaling} (\cref{thm:spd}). Times are
single-core wall-clock and reproduced by the released \texttt{run\_scaling}
script.}
\label{tab:scaling}
\begin{tabular}{r r r r r}
\toprule
interior nodes & edges & CG (s) & eigensolve (s) & $\max\lvert\Delta\rvert$\\
\midrule
$1{,}000$  & $1{,}980$  & $0.010$ & $0.16$ & $2.7\times10^{-9}$\\
$2{,}000$  & $3{,}960$  & $0.015$ & $1.19$ & $4.0\times10^{-9}$\\
$4{,}000$  & $7{,}920$  & $0.026$ & $12.89$ & $3.0\times10^{-9}$\\
$8{,}000$  & $15{,}840$ & $0.054$ & \emph{out of memory} & n/a\\
$16{,}000$ & $31{,}840$ & $0.121$ & \emph{out of memory} & n/a\\
$40{,}000$ & $79{,}800$ & $0.392$ & \emph{out of memory} & n/a\\
\bottomrule
\end{tabular}
\end{table}

\begin{table}[h]
\centering
\caption{Regularizer contrast on real data at $\lambda_1=1$, versus the
unregularized solve (exact values for \cref{fig:real-contrast}). ``Range ret.''
is interior dynamic range as a fraction of the base; ``Rank agree.'' is the
Spearman correlation of the interior ordering with the base.}
\label{tab:real-contrast}
\begin{tabular}{l c cc cc}
\toprule
& & \multicolumn{2}{c}{graph-Sobolev} & \multicolumn{2}{c}{Tikhonov}\\
\cmidrule(lr){3-4}\cmidrule(lr){5-6}
Dataset & states & Range ret. & Rank agree. & Range ret. & Rank agree.\\
\midrule
RetailRocket & $4$   & $0.407$ & $1.000$ & $0.208$ & $1.000$\\
Trivago      & $11$  & $0.276$ & $0.883$ & $0.053$ & $0.833$\\
OTTO         & $105$ & $0.279$ & $0.662$ & $0.002$ & $0.419$\\
\bottomrule
\end{tabular}
\end{table}

\begin{table}[h]
\centering
\caption{Downstream utility on OTTO: session-level conversion prediction
(\cref{sec:real-downstream}), held-out ROC-AUC (mean$\pm$sd over three splits;
$27\%$ conversion base rate, $186{,}271$ sessions). Features use only the
interior (non-sink) states a session visits; the label is the terminal sink, so
the task is leakage-free. The graph-Sobolev potential is a usable node
feature---adding $+3.1$ points over raw statistics and beating the ridge
potential by $+11.8$ points standalone---whereas the collapsed ridge potential
adds nothing. A trained directed GNN (\texttt{run\_gnn}, mean/in/out message
passing, node embeddings pooled over the same visited states) is stronger
($0.861$) but \emph{oversmooths with depth}, its AUC falling to $0.807$ at
$32$ layers as node-embedding Dirichlet energy collapses; neutralizing the
constant/gauge mode each layer holds it flat ($0.854$)---the gauge principle of
\cref{sec:related} applied to the network, recovering PairNorm
\citep{zhao2020pairnorm}. Bag-of-states (full $105$-dim visit counts) is the
state-identity ceiling. Architecture and gauge-centering in \cref{app:gnn};
reproduced by the released \texttt{conversion\_prediction} and \texttt{run\_gnn}
experiments.}
\label{tab:downstream}
\begin{tabular}{l c}
\toprule
Feature set & test ROC-AUC\\
\midrule
Raw session statistics (length, \#distinct)  & $0.828\pm0.001$\\
\quad$+$ graph-Sobolev potential ($4$-D)      & $\mathbf{0.859\pm0.000}$\\
\quad$+$ Tikhonov potential ($4$-D)           & $0.829\pm0.001$\\
graph-Sobolev potential alone ($4$-D)         & $0.836\pm0.000$\\
Tikhonov potential alone ($4$-D)              & $0.717\pm0.001$\\
\midrule
Directed GNN, $2$ layers (trained)            & $0.861$\\
Directed GNN, $32$ layers (oversmoothed)      & $0.807$\\
Directed GNN, $32$ layers (gauge-centered)    & $0.854$\\
Bag-of-states ceiling ($105$-D)               & $0.899\pm0.001$\\
\bottomrule
\end{tabular}
\end{table}

\paragraph{Downstream protocol (\cref{tab:downstream}).}
\label{app:downstream}
Each session is labelled by its terminal sink (converted iff it ends at
\texttt{orders}) and featurised only from the interior (non-sink) states it
visits, so the pinned label cannot leak into the features. The potential
features are the four order statistics---mean, max, min, and last-visited
$\hat\phi$---over those states; the raw baseline is
$[\#\text{visits},\ \#\text{distinct states}]$, and ``raw$+$'' rows concatenate
the two. The classifier is an $L_2$-regularised logistic regression
(scikit-learn defaults, $C{=}1$); we hold out $30\%$ of sessions and average over
three random splits, and---to avoid leakage across the split---fit the potential
on the training sessions only, then apply it to both. Because $\hat\phi$ is a
function of the state, the potential features are a $1$-D-per-state
\emph{compression} of node identity, and therefore sit below the bag-of-states
ceiling ($0.899$) by construction: they cannot exceed the full $105$-dimensional
identity. The finding is not that they do, but that the gauge-invariant
compression recovers about a third of the raw-to-ceiling AUC gap in four
features while the ridge compression recovers essentially none; a learned
encoder (e.g.\ GraphSAGE) on $\hat\phi$ could close the remainder. Reproduced by
the released \texttt{conversion\_prediction} experiment.

\paragraph{Directed-GNN baseline and gauge-centering (\cref{tab:downstream}).}
\label{app:gnn}
The GNN rows use a directed graph convolutional network on the extracted DAG
$G_\delta$ (topological-sort support, $N=105$ nodes). With learnable node
embeddings $H^0\in\R^{N\times d}$ ($d=32$), each layer aggregates from in- and
out-neighbours separately,
\begin{equation}
  H^{l+1} = \mathrm{ReLU}\!\left(\hat A_{\mathrm{out}} H^l W^l_{\mathrm o}
  + \hat A_{\mathrm{in}} H^l W^l_{\mathrm i}\right),
  \label{eq:digcn}
\end{equation}
where $\hat A_{\mathrm{out}}$ is the row-normalized adjacency of $G_\delta$ and
$\hat A_{\mathrm{in}}$ that of its transpose (self-loops enter through the pure
propagation of \cref{sec:related}). A session is scored by mean-pooling the final
embeddings over the interior states it visits---the same pooling and leakage
guard as the potential features (above)---followed by a logistic layer; we train
end-to-end with Adam (learning rate $10^{-2}$, weight decay $10^{-4}$, $80$
epochs, binary cross-entropy) on the same $70/30$ session split.

Oversmoothing drives $H^l$ toward the constant (gauge) mode, the smallest
eigenvector of the symmetric normalized Laplacian $L_{\mathrm{sym}}$, which is
$u_0\propto d^{1/2}$ (node degrees $d$). \emph{Gauge-centering} removes this
component after every layer,
\begin{equation}
  H^{l}\leftarrow H^{l}-u_0\,\bigl(u_0^\top H^{l}\bigr),\qquad \|u_0\|_2=1,
  \label{eq:gauge-center}
\end{equation}
i.e.\ it subtracts the \emph{degree-weighted} mean embedding---the per-layer
analogue of $\LG\onevec=0$ (for a regular graph $u_0\propto\onevec$ and this is
the ordinary mean). The ``gauge$+$scale'' variant additionally rescales $H^l$ to
fixed Frobenius norm, recovering PairNorm \citep{zhao2020pairnorm}. Oversmoothing
is measured by the node Dirichlet energy $\mathcal E(H)=\Tr(H^\top
L_{\mathrm{sym}}H)/\|H\|_F^2$, which vanishes as representations collapse. Across
depths $\{2,4,8,16,32\}$ the vanilla GCN's test AUC falls $0.86\!\to\!0.81$ as
$\mathcal E$ collapses (to $0.52$ at $16$ layers), while gauge-centering holds
AUC flat ($0.85$ at $32$ layers). Reproduced by the released \texttt{run\_gnn}
experiment.

\begin{figure}[h]
\centering
\includegraphics[width=0.58\linewidth]{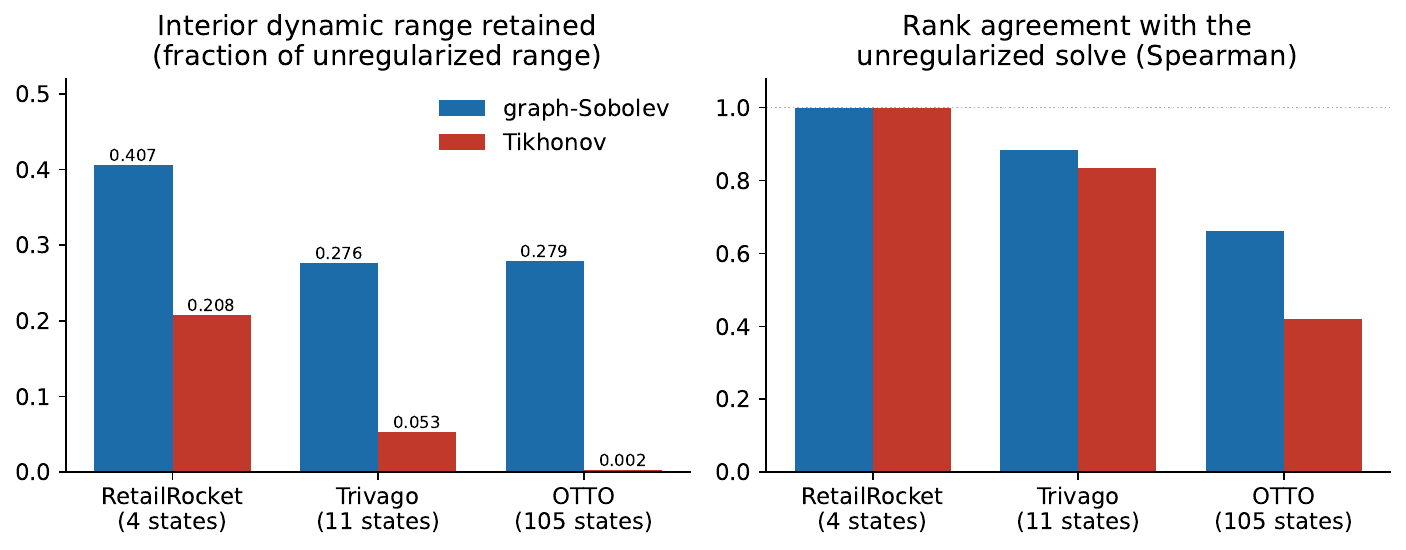}
\caption{Regularizer contrast on real data ($\lambda_1=1$) versus the
unregularized solve. Left: interior dynamic range retained. Right: rank
agreement with the base solve. Ridge's collapse deepens as the graph grows,
reaching near-total loss on OTTO; graph-Sobolev preserves both throughout.
Exact values in \cref{tab:real-contrast}.}
\label{fig:real-contrast}
\end{figure}

\begin{table}[h]
\centering
\caption{Robustness of the regularizer contrast across the instrument's
configuration space, deferred from \cref{sec:exp-regime}. Twelve variants: the
default funnel at five seeds $\{42,1,7,123,2024\}$ and seven single-axis
perturbations (branch count $\in\{3,8\}$, chain depth $\in\{30,80\}$,
abandonment $\in\{0.2,0.5\}$, sink entropy $0.3$). Entries are mean$\pm$sd of the
interior Spearman across the twelve variants at $\lambda_1=1$; ``margin'' is
graph-Sobolev$-$Tikhonov, ``min'' its smallest value over the twelve, and
``$>0$'' the count with positive margin. The contrast holds under every variant;
on the topological-sort support the graph-Sobolev ordering is itself
configuration-invariant. Reproduced by the released \texttt{regularizer\_robustness}
experiment.}
\label{tab:robustness}
\begin{tabular}{l cc cc c}
\toprule
& \multicolumn{2}{c}{Spearman vs.\ $\phitrue$} & \multicolumn{2}{c}{margin (Sob$-$Tik)} & \\
\cmidrule(lr){2-3}\cmidrule(lr){4-5}
Support & graph-Sobolev & Tikhonov & mean & min & $>0$\\
\midrule
Topological sort & $+0.99\pm0.00$ & $-0.34\pm0.06$ & $+1.34\pm0.05$ & $+1.20$ & $12/12$\\
Helmholtz--Hodge & $+0.78\pm0.17$ & $-0.38\pm0.09$ & $+1.15\pm0.22$ & $+0.45$ & $12/12$\\
\bottomrule
\end{tabular}
\end{table}

\begin{figure}[h]
\centering
\includegraphics[width=0.7\linewidth]{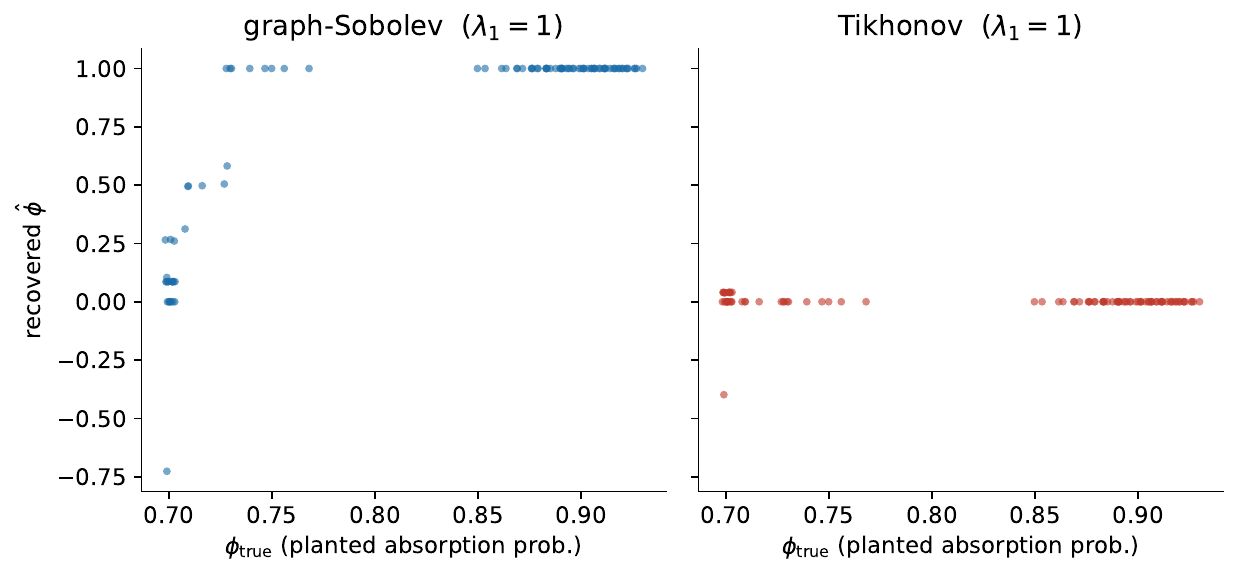}
\caption{Recovered $\phihat$ versus planted $\phitrue$ at $\lambda_1=1$
(synthetic instrument). Graph-Sobolev (left) retains a positive relationship
with the planted field; Tikhonov (right) collapses every interior state into a
flat band near the abandon origin, the range collapse of
\cref{prop:tik-collapse}.}
\label{fig:scatter}
\end{figure}

\begin{figure}[h]
\centering
\includegraphics[width=0.6\linewidth]{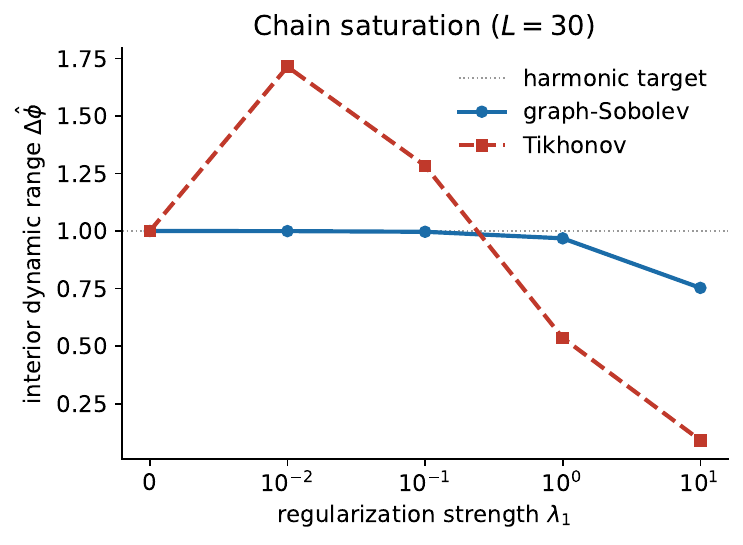}
\caption{Interior dynamic range $\Delta\phihat$ on a length-$30$ chain versus
$\lambda_1$. Graph-Sobolev stays near the unit harmonic target; Tikhonov
overshoots at small $\lambda_1$ then collapses toward zero.}
\label{fig:chain}
\end{figure}

\begin{table}[h]
\centering
\caption{Blind sink discovery on the synthetic instrument. The five planted
conversion sinks, ranked among $277$ states by Poisson residual versus by
recovered potential (lower is better); the residual places all sinks in the top
seven, invariantly across abandonment regimes.}
\label{tab:sink}
\begin{tabular}{l c c}
\toprule
Method & worst sink rank & regime ($\text{abandon}\in\{0.05,0.2,0.5\}$)\\
\midrule
Poisson residual (ours) & $7$  & invariant\\
Recovered potential     & $108$ & n/a\\
\bottomrule
\end{tabular}
\end{table}

\begin{figure}[h]
\centering
\includegraphics[width=\linewidth]{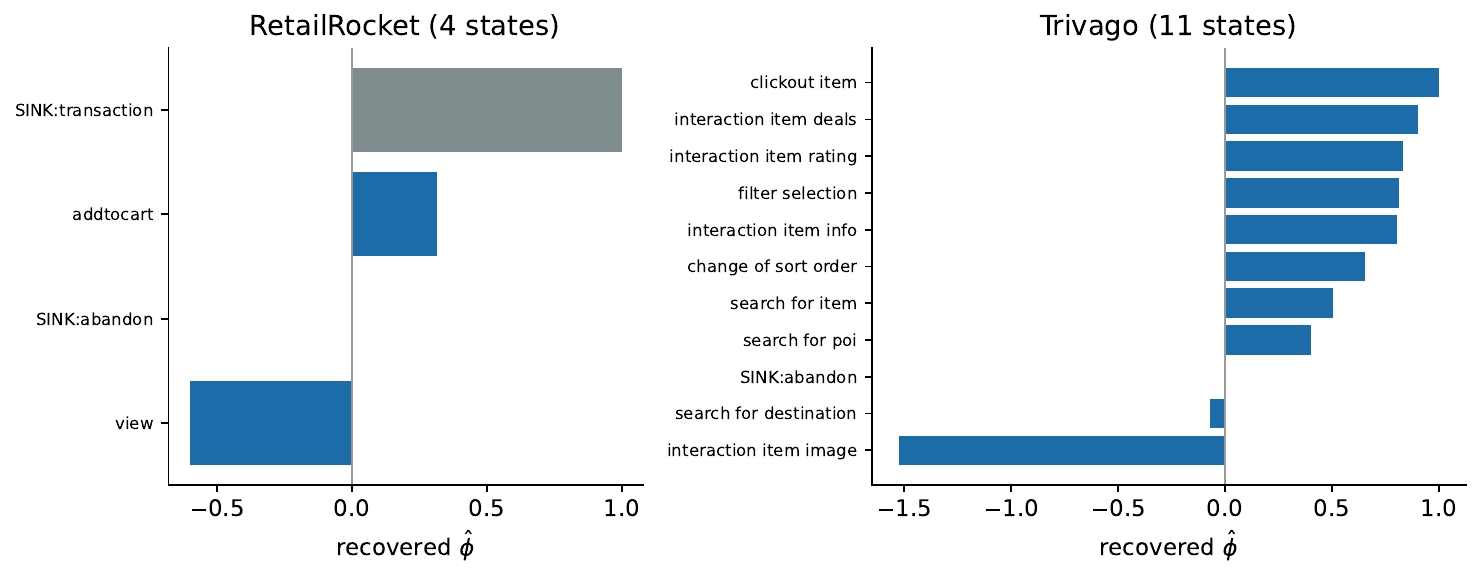}
\caption{Recovered graph-Sobolev potential ($\lambda_1=1$) on RetailRocket and
Trivago. Grey bars are pinned sinks; blue bars are recovered interior states.
The ordering reproduces the known funnel; the negative \texttt{interaction item
image} state on Trivago is discussed in \cref{sec:real-interp}.}
\label{fig:real-potentials}
\end{figure}

\begin{figure}[h]
\centering
\includegraphics[width=0.6\linewidth]{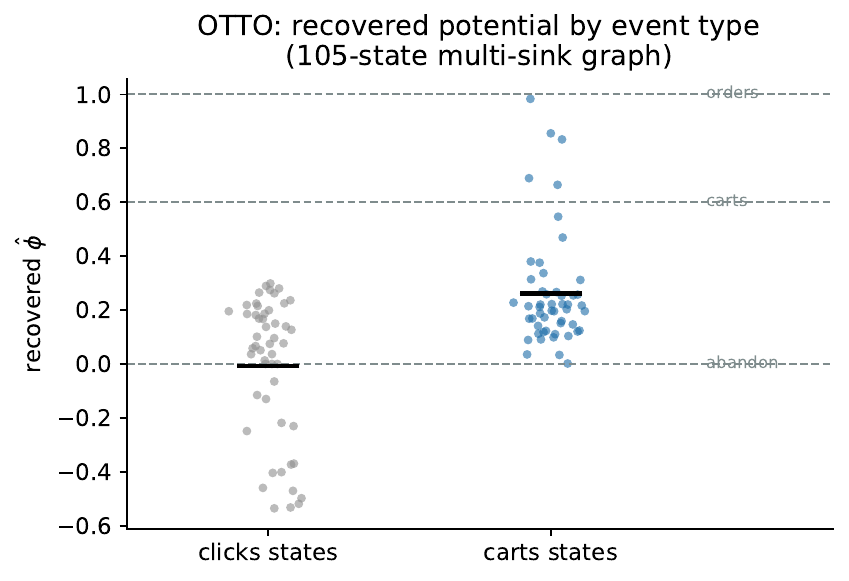}
\caption{OTTO recovered potential by event type ($105$-state multi-sink graph).
Each point is an interior state; black bars are event-type means; dashed lines
are the three Dirichlet sink levels. The solve separates \texttt{carts} states
(mean $+0.26$, straddling the cart sink) from \texttt{clicks} states (mean
$-0.01$, near abandon).}
\label{fig:otto}
\end{figure}

\section{Uncertainty Quantification}
\label{app:uq}
The divergence $b$ is an empirical estimate, and its sampling covariance
propagates to $\phi$ in closed form. On the reduced interior system
$\phi_\interior=\Lap_{\interior\interior}^{-1} b_\interior^{\mathrm{eff}}$
(at $\lambda_1=0$ for clarity), the delta method gives
\begin{equation}
  \operatorname{Cov}(\phi_\interior) \;=\;
  \Lap_{\interior\interior}^{-1}\,
  \widehat{\operatorname{Cov}}(b_\interior)\,
  \Lap_{\interior\interior}^{-\top},
\end{equation}
with $\widehat{\operatorname{Cov}}(b)$ estimated from the multinomial
transition counts. For $\lambda_1>0$ replace $\Lap_{\interior\interior}^{-1}$
by $M^{-1}\Lap_{\interior\interior}^\top$ with $M$ as in \cref{eq:M}.

\section{Reproduction}
\label{app:repro}
The released repository (\url{https://github.com/MohammadForouhesh/gauge-flow-recovery},
with a self-contained \texttt{README}) contains the core solver
(\texttt{poisson\_inverse/core.py}), the synthetic instrument
(\texttt{poisson\_inverse/synthetic.py}), the experiments
(\texttt{poisson\_inverse/experiments.py}), and two scripts:
\texttt{scripts/run\_experiments.py} writes all numerical results to JSON,
and \texttt{scripts/make\_figures.py} regenerates
\cref{fig:sweep,fig:scatter,fig:chain}. The default seed reproduces every
value reported in \cref{tab:sweep,tab:sink} and in
\cref{sec:exp-structure,sec:sob-empirics}.

The real-data results of \cref{sec:real} are reproduced by
\texttt{poisson\_inverse/loaders.py} (the RetailRocket, Trivago, and OTTO
loaders, which reduce each corpus to its event-type state space) together
with \texttt{scripts/run\_real\_data.py} (which runs the interpretability,
regularizer-contrast, residual, and bootstrap checks) and
\texttt{scripts/make\_real\_figures.py} (which regenerates
\cref{fig:real-contrast,fig:real-potentials,fig:otto} from the per-dataset
result files). Only the single raw event file per corpus is required; the
loaders construct the state space and Dirichlet boundary automatically.

\end{document}